\numberwithin{equation}{section}
\newtheorem{thm}{Theorem}[section]
\newtheorem{defn}{Definition}[section]
\newtheorem{coro}[thm]{Corollary}
\newtheorem{lemma}[thm]{Lemma}
\newtheorem{prop}[thm]{Proposition}
\def\proclaim#1{\par \bigskip\noindent {\bf #1}\bgroup\it\ }
\def\endproclaim{\egroup\par\bigskip}
\newcommand{\dif}{\mathrm{d}}
\def\text#1{\mbox{\rm #1}}
\def\overset#1#2{\stackrel{#1}{#2} }
\begin{document}
\title{\bf Arena Model: Inference About Competitions}
\date{}
\maketitle
\begin{center}
\vskip -1.5cm {{\sc {\large Chenhe Zhang}\footnote[1]{Department of Mathematics, Zhejiang University, Hangzhou 310007, P.R.China; 3150104161@zju.edu.cn}  and  {\large Peiyuan Sun}\footnote[2]{Department of Mathematics, Zhejiang University, Hangzhou 310007, P.R.China; 3150104069@zju.edu.cn}}}
\end{center}

\bigskip
\noindent{\bf Abstract.} 
The authors propose a parametric model called the arena model for prediction in paired competitions, i.e. paired comparisons with eliminations and bifurcations. The arena model has a number of appealing advantages. First, it predicts the results of competitions without rating many individuals. Second, it takes full advantage of the structure of competitions. Third, the model provides an easy method to quantify the uncertainty in competitions. Fourth, some of our methods can be directly generalized for comparisons among three or more individuals. Furthermore, the authors identify an invariant Bayes estimator with regard to the prior distribution and prove the consistency of the estimations of uncertainty. Currently, the arena model is not effective in tracking the change of strengths of individuals, but its basic framework provides a solid foundation for future study of such cases.

\noindent{\bf AMS 2010 subject classification:} 60K37, 62F07, 62F15.
\\
\noindent{\bf Keywords:} paired comparisons, competitions, arena model without fluctuations, Bayesian inference, arena model with fluctuations, the coefficient of fluctuations.

\section{Introduction}
The research on paired comparisons has a long history. In 1927, Thurstone \cite{PA} studied a psychological continuum and compared two physical stimulus magnitudes. Two decades later, Bradley and Terry \cite{RAI} proposed a model for rating players and Elo \cite{RCP} developed a system with a heuristic algorithm to update ranks of the players. In the past half century, many statistical studies have been devoted to paired comparisons from various perspectives, including but not limited to, ties in paired comparison experiments \cite{TPCETM,TPCEBT}, dynamic Bradley-Terry models concerning changeable merits \cite{DBTM,DSM,DPC,PEL}, algorithms for ranking \cite{RCDS,MMA,SPSL}, and applications in sports \cite{SMN,SGT}.

Although current models have been tremendously successful in many real-world applications such as sports and chess, they do have a number of limitations. First, it is rather complicated and inefficient to rate all individuals if there are too many of them. Second, elimination games only permit players who beat their opponents to advance to the next round and few current models take advantage of this. Third, few models provide a quantitative description to ``how much the outcome of a match is influenced by skill, or by chance'', as questioned in \cite{MCS}. Fourth, to the best knowledge of the authors, no models based on paired comparisons are directly applicable to comparisons among three or more objects. The above observations motivate questions as follows.
\begin{enumerate}[(Q-1)]
\item Can we predict results of comparisons without any rating system?
\item How to exploit eliminations to forecast the performance of individuals?
\item How to quantify the uncertainty in comparisons?
\item How to directly predict results of comparisons where $p$ individuals win out from $q$ individuals, without using paired comparisons?
\end{enumerate}

In this paper, we answer all the above questions. Following most classic models, we assume that the results of a competition depend on two families of factors: that of underlying individuals attributes called \emph{strengths} and that of interaction functions. Different from most current models, we focus on prediction without any rating system and introduce \emph{competitions} as special comparisons reminiscent of eliminations and evolutions. Moreover, we start from the structure generated by competitions and propose an original probability model called \emph{arena}. We first define the simplest family of arenas to introduce the concept and derive its basic properties, which are our answers to (Q-1) and (Q-2). Then a specific arena is generalized in the most straightforward way to fit reality. We propose an important metric of uncertainty in competitions with its consistent estimators to answer (Q-3). Actually, the arena permits comparisons among more than two individuals. The corresponding conclusions and methods can be easily generalized from paired competitions, which answers (Q-4). As Aldous stated in \cite{ERS}, ``there has been surprisingly little ``applied probability'' style mathematical treatment of the basic model.'' Hence, with some reasonable assumptions, we focus on mathematical derivations more than data analysis and simulations to pave the way for the future study.

The rest of the paper is organized as follows. In Section 2, we discuss a probability problem which is helpful to understand the concept of arenas and define arenas without fluctuations. A Bayes estimator of an individual's future results is given in Section 3, which is invariant in a sense. This is followed in Section 4 by an improvement of former arenas. An important parameter called \emph{the coefficient of fluctuations} is proposed with consistent estimators. In Section 5, we further improve arenas and illustrate the attendant influence as well as a new consistent estimator. Finally, the arena model is evaluated by data analysis and simulations. We also compare our model to classic ones on paired comparisons in Section 6. See Appendix A for the proof of a theorem in Section 5.

\section{What are arenas}

To better understand the concept of arenas, we first discuss a probability problem. Consider N=$2^{m+n}$ players are playing a game. The rules of the game are as follows.

(R1) At the beginning, every player is assigned a random number $X$ as his \emph{strength}. It does not change during this game. Assume $X$ is a continuous random variable and its probability density $p(\cdot)$ is supported on $\Theta$. Besides, each player's win/loss record is denoted by a dualistic array $(\cdot,\cdot)$ called his \emph{state}. The states of all players are $(0,0)$ at the beginning.

(R2) In the first \emph{round}, every player is randomly assigned an opponent. The player with higher strength wins this round, and his state turns into (1,0); the player with lower strength loses this round, and his state turns into (0,1).

(R3) Given $i<m$ and $j<n$, a player whose state is $(i,j)$ will be randomly arranged an opponent whose state is also $(i,j)$ in his next round. The player with higher strength wins this round, and his state turns into $(i+1,j)$; the player with lower strength loses this round, and his state turns into $(i,j+1)$.

(R4) A state $(i,j)$ is called a \emph{boundary state} if $i=m$ or $j=n$. After a player reaches a boundary state, the corresponding state is called his \emph{result} of this game and this game is over for him.

\begin{figure*}[!htb]
\centering
\subfigure[Figure 1: Elimination form] {\includegraphics[height=2in,width=2.8in,angle=0]{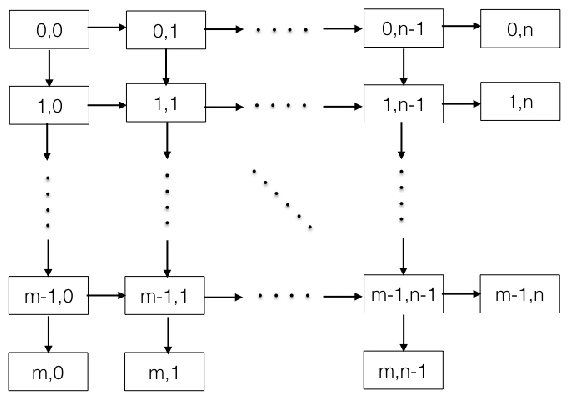}}
\subfigure[Figure 2: Bifurcation form] {\includegraphics[height=2in,width=2.8in,angle=0]{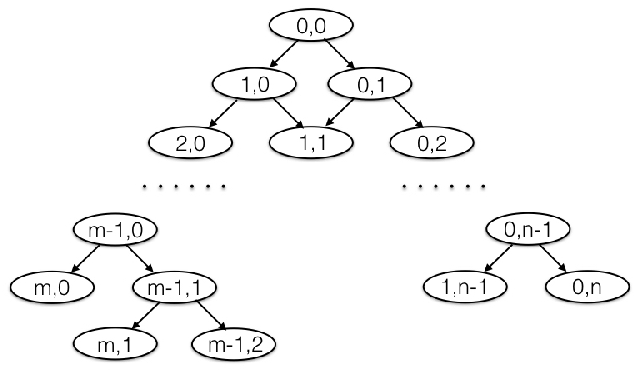}}
\end{figure*}

Figure 1 and Figure 2 describe the transformation of players' states and can help us understand the matchmaking process. We call a game that satisfies these four rules an $m$-$n$ \emph{arena game}. Since $X$ is a continuous random variable, the probability that two players' strengths are identical is zero. So we think without loss of generality that no ties happen in an arena game. First of all, we give the following theorem, which tells us the probability for a player to obtain different results.

\begin{thm}
For the above game, the possible results of a player are 
$$(m,0),(m,1),\cdots,(m,n-1),(m-1,n),(m-2,n),\cdots,(0,n).$$
Let $\mathcal{A}_{i,j}$ denote the event that the player has reached the state $(i,j)$, then we have
\begin{equation}
\begin{split}
\mathbb{P}(\mathcal{A}_{m,j})=&\binom{m+j-1}{m-1}(\frac{1}{2})^{m+j}, \qquad j = 0,1,\cdots,n-1,\\
\mathbb{P}(\mathcal{A}_{i,n})=&\binom{n+i-1}{n-1}(\frac{1}{2})^{n+i}, \qquad i = 0,1,\cdots,m-1.
\end{split}
\label{e prob no infer1}
\end{equation}
\label{t prob no infer1}
\end{thm}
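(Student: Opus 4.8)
The plan is to read the possible results straight off the transition rules, reduce the whole computation to the single fact that every match is a fair coin toss, and then solve an elementary lattice recursion. First I would trace one player through the state lattice: each match sends the state from $(i,j)$ to $(i+1,j)$ on a win or to $(i,j+1)$ on a loss, and the player stops the first time a coordinate reaches its bound. The first time the win-count reaches $m$ the player sits at some $(m-1,j)$ with $0\le j\le n-1$ and wins; the first time the loss-count reaches $n$ the player sits at some $(i,n-1)$ with $0\le i\le m-1$ and loses. Since the game halts at the first boundary hit, the state $(m,n)$ is never attained and the possible results are exactly the $m+n$ listed states. Writing $P_{i,j}=\mathbb{P}(\mathcal{A}_{i,j})$, one has $P_{0,0}=1$.

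The crux, and the only genuinely probabilistic step, is the identity $\mathbb{P}(\text{win}\mid\mathcal{A}_{i,j})=\tfrac12$ at every non-boundary state. This is not automatic: the strengths are frozen at the outset, so a strong player is simultaneously more likely to reach $(i,j)$ and more likely to win there, and one must check that the two biases cancel. I would argue by exchangeability. A player reaches $(i,j)$ exactly at round $i+j$, so the players occupying $(i,j)$ at that round are precisely those that ever reach it; this class has even size (in fact $\binom{i+j}{i}2^{m+n-i-j}$) and is split into pairs, each contributing one winner, so deterministically half of the players reaching $(i,j)$ win there. Moreover the joint law is invariant under relabeling the $N$ i.i.d. strengths together with the label-symmetric matchmaking, so all probabilities $\mathbb{P}(\mathcal{A}_{i,j}^{(\ell)})$ coincide and likewise all probabilities of reaching $(i,j)$ and winning coincide; comparing these with the deterministic half-split forces the probability of reaching $(i,j)$ and winning to equal $\tfrac12 P_{i,j}$, i.e. $\mathbb{P}(\text{win}\mid\mathcal{A}_{i,j})=\tfrac12$. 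An equivalent route transposes the labels of the fixed player and a candidate opponent $A$: this is measure-preserving and fixes the symmetric event that both reach $(i,j)$ and are paired, so conditionally $\mathbb{P}(X_1>X_A)=\tfrac12$, with no need for the two match histories to be disjoint. I expect this symmetry argument to be the main obstacle, precisely because the fixed strengths make the fairness of each match look false at first glance.

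With fairness in hand, conditioning on the last match gives the recursion $P_{i,j}=\tfrac12 P_{i-1,j}+\tfrac12 P_{i,j-1}$ (with $P_{i,j}=0$ for negative indices), valid at every interior state because its two predecessors are also interior. This weighted Pascal recursion solves to $P_{i,j}=\binom{i+j}{i}(\tfrac12)^{i+j}$ for $0\le i\le m-1$ and $0\le j\le n-1$. A result state $(m,j)$ with $j\le n-1$ has $(m-1,j)$ as its only non-boundary predecessor, so $P_{m,j}=\tfrac12 P_{m-1,j}=\binom{m+j-1}{m-1}(\tfrac12)^{m+j}$; similarly $(i,n)$ is reached only from $(i,n-1)$, giving $P_{i,n}=\binom{n+i-1}{n-1}(\tfrac12)^{n+i}$. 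These are exactly \eqref{e prob no infer1}, and summing the two families over the stated ranges returns $1$, a useful sanity check.
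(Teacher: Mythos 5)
Your proof is correct and follows essentially the same route as the paper: the deterministic half-split of each interior class $(i,j)$ into winners and losers, the Pascal-type recursion $P_{i,j}=\binom{i+j}{i}(\tfrac12)^{i+j}$, and the observation that each boundary state has a single source giving the extra factor $\tfrac12$. The only difference is that you make explicit the exchangeability argument needed to convert the deterministic population split into the individual conditional probability $\mathbb{P}(\text{win}\mid\mathcal{A}_{i,j})=\tfrac12$, a step the paper's proof asserts without comment.
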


\begin{proof}
For all players in state $(i,j)$, where $i<m$ and $j<n$, half of their states will turn into $(i+1,j)$ and half $(i,j+1)$. Therefore, we have
\begin{equation}
\mathbb{P}(\mathcal{A}_{i,j}) = \binom{i+j}{i}\bigl(\frac{1}{2}\bigr)^{i+j},\quad 0\leqslant i\leqslant m-1,0\leqslant j\leqslant n-1.
\end{equation}
Since there is only one source for each boundary state, we find
\begin{equation}
\mathbb{P}(\mathcal{A}_{m,j})=\frac{1}{2}\mathbb{P}(\mathcal{A}_{m-1,j})= \binom{m+j-1}{m-1}\bigl(\frac{1}{2}\bigr)^{m+j}, \quad j=0,1,\cdots,n-1,
\end{equation}
\begin{equation}
\mathbb{P}(\mathcal{A}_{i,n})=\frac{1}{2}\mathbb{P}(\mathcal{A}_{i,n-1})= \binom{n+i-1}{n-1}\bigl(\frac{1}{2}\bigr)^{n+i}, \quad i=0,1,\cdots,m-1.
\end{equation}
\end{proof}

The probability here has nothing to do with his strength and we have little information of this player. We now consider strengths. It is evident that a player in state $(i,j+1)$ is likely to have higher strength compared with a player in state $(i,j)$. The following theorem presents the probability distribution of the strength of a player in different states.

\begin{thm}
Let $X_{i,j}$ denote the strength of a player who has reached state $(i,j)$ in a run. Then $X_{i,j}$ is a continuous random variable and 
\begin{equation}
\left\{\begin{aligned}
p_{0,0}(x)&=p(x),\\
p_{i,0}(x)&=2p_{i-1,0}(x)\int_{-\infty}^xp_{i-1,0}(t)dt,\qquad(1 \leqslant i \leqslant m-1)\\
p_{0,j}(x)&=2p_{0,j-1}(x)\int_x^{+\infty}p_{0,j-1}(t)dt,\qquad(1 \leqslant j \leqslant n-1)\\
p_{i,j}(x)&=\frac{2i}{i+j}p_{i-1,j}(x)\int_{-\infty}^{x}p_{i-1,j}(t)dt+\frac{2j}{i+j}p_{i,j-1}(x)\int_x^{+\infty}p_{i,j-1}(t)dt,\\ 
&\quad(1\leqslant i\leqslant m-1,1\leqslant j\leqslant n-1)\\
p_{m,j}(x)&=2p_{m-1,j}(x)\int_{-\infty}^xp_{m-1,j}(t)dt,\qquad(0 \leqslant j \leqslant n-1)\\
p_{i,n}(x)&=2p_{i,n-1}(x)\int_x^{+\infty}p_{i,n-1}(t)dt,\qquad(0 \leqslant i \leqslant m-1)
\end{aligned}\right.
\label{e dens of str}
\end{equation}
where $p_{i,j}(\cdot)$ is the probability density of $X_{i,j}$.
\label{t dens of str}
\end{thm}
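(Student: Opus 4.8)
The plan is to obtain each $p_{i,j}$ as a conditional density, reading the recursion directly off the fact that a player leaves a state according to whether their strength is the larger or the smaller in their pairing. The one structural input I will use everywhere is that when two players sitting in a common state $(i,j)$ are matched, their strengths are independent copies of the single law with density $p_{i,j}$. Granting this, the winner (who advances to $(i+1,j)$) carries the distribution of the maximum of two such copies, whose density is $2p_{i,j}(x)\int_{-\infty}^x p_{i,j}(t)\,dt$, while the loser (who moves to $(i,j+1)$) carries the distribution of the minimum, with density $2p_{i,j}(x)\int_x^{+\infty}p_{i,j}(t)\,dt$. These are just the two order-statistic densities of a sample of size two, so this step is routine.

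First I would dispose of the edge states $(i,0)$ and $(0,j)$ and the boundary states $(m,j)$ and $(i,n)$ in one stroke, since each has a unique predecessor. A player in $(i,0)$ with $i\ge1$ can only have arrived by winning out of $(i-1,0)$, so its density is exactly the maximum density built from $p_{i-1,0}$, giving $p_{i,0}(x)=2p_{i-1,0}(x)\int_{-\infty}^x p_{i-1,0}(t)\,dt$; the state $(0,j)$ is the symmetric minimum case. The boundary states are identical in spirit: once $i=m$ or $j=n$ the game stops, so $(m,j)$ can be fed only from $(m-1,j)$ by a win and $(i,n)$ only from $(i,n-1)$ by a loss, which reproduces the last two lines of \eqref{e dens of str} verbatim.

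The interior case $1\le i\le m-1$, $1\le j\le n-1$ is the real content. Here the event $\mathcal{A}_{i,j}$ splits as a disjoint union of ``win out of $(i-1,j)$'' and ``lose out of $(i,j-1)$'', so by the law of total probability $p_{i,j}$ is a mixture of the maximum density built from $p_{i-1,j}$ and the minimum density built from $p_{i,j-1}$. The mixing weights are the conditional probabilities of the two predecessors given $\mathcal{A}_{i,j}$, which I would compute from Theorem~\ref{t prob no infer1}: the probability of reaching $(i,j)$ by winning the final match is $\tfrac12\mathbb{P}(\mathcal{A}_{i-1,j})=\binom{i+j-1}{i-1}(\tfrac12)^{i+j}$, and dividing by $\mathbb{P}(\mathcal{A}_{i,j})=\binom{i+j}{i}(\tfrac12)^{i+j}$ yields, after Pascal's identity, the weight $i/(i+j)$; the complementary weight is $j/(i+j)$. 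Substituting gives precisely the stated interior recursion, so modulo the structural input the whole theorem is a short computation.

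The main obstacle is therefore not the algebra but justifying that two players matched within the same state carry \emph{independent} copies of $p_{i,j}$, since this is what legitimizes the order-statistic densities. The natural route is induction on the rounds: the claim holds at $(0,0)$ because the initial strengths are i.i.d.\ with density $p$, and at a later round two players sharing a state are the winners (or losers) of two earlier matches, each of which should involve a disjoint group of players, so that independence and the correct order-statistic law propagate by the inductive hypothesis. The delicate point is exactly this disjointness — players' paths can in principle re-converge on a common state, which would break independence — and it is here, rather than in any calculation, that the combinatorial structure of the arena (and the ``random assignment from a large pool'' mechanism of rules (R2)--(R3)) must be invoked to guarantee that the feeding matches draw on independent strengths.
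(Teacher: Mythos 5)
Your proposal is correct, but it is not the route the paper's formal proof takes. The paper proves the theorem by a direct conditional-probability computation: it writes $\mathbb{P}(x<X_{i,j}\leqslant x+\Delta x)=\mathbb{P}(x<X\leqslant x+\Delta x\mid\mathcal{A}_{i,j})$, expands via Bayes' theorem through the predecessor event (or, in the interior case, the disjoint union of the two predecessor events), extracts the factor $\mathbb{P}(\mathcal{A}_{i,j})/\mathbb{P}(\mathcal{A}_{i+1,j})$ from Theorem~\ref{t prob no infer1}, and passes to the limit $\Delta x\to 0$ after sandwiching the win probability between $F_{i,j}(x)$ and $F_{i,j}(x+\Delta x)$. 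Your argument instead reads each line of \eqref{e dens of str} as an order-statistic density (max or min of two i.i.d.\ draws from the predecessor law) and, for the interior states, computes the mixture weights $i/(i+j)$ and $j/(i+j)$ from Theorem~\ref{t prob no infer1} via Pascal's identity --- this is exactly the alternative derivation the paper itself sketches informally right after Corollary~\ref{t dist of str} but does not carry out, and it has the advantage (which the paper also notes) of generalizing directly to comparisons among three or more individuals, whereas the paper's $\Delta x$ computation is more self-contained at the level of events. Both arguments ultimately rest on the same structural input, namely that two players matched within a common state have independent strengths each distributed as $p_{i,j}$; the paper simply asserts this by appeal to (R3), so your explicit flagging of this as the delicate point (and your sketch of how the random-matching mechanism is meant to deliver it) makes your write-up, if anything, more honest about where the real content lies. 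Your weight calculation checks out: $\tfrac12\mathbb{P}(\mathcal{A}_{i-1,j})/\mathbb{P}(\mathcal{A}_{i,j})=\binom{i+j-1}{i-1}/\binom{i+j}{i}=i/(i+j)$, which reproduces the coefficient $\tfrac{2i}{i+j}$ once combined with the factor $2$ from the order-statistic density.
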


\begin{proof}
Let $X$ denote the strength of a player and $\mathcal{A}_{i,j}$ have the same meaning as we stated before, then we have $X_{0,0}\overset{\text{d}}{=}X\sim p(x)$ and
\begin{equation}
\mathbb{P}(x<X_{i,j}\leqslant x+\Delta x)=\mathbb{P}(x<X\leqslant x+\Delta x|\mathcal{A}_{i,j}).
\label{e mean of xij}
\end{equation}

(Case a)\ $1\leqslant i\leqslant m-1,j=0$ or $1\leqslant j\leqslant n-1,i=0.$ Assume $X_{i,0}$ is a continuous random variable and its probability density function is $p_{i,0}(\cdot)$. For $\Delta x>0$, according to the Theorem \ref{t prob no infer1}, (R3) and (\ref{e mean of xij}), we have
\[\begin{split}
\mathbb{P}(x<X_{i+1,0}\leqslant x+\Delta x)&=\frac{\mathbb{P}(\mathcal{A}_{i+1,0}|x< X\leqslant x+\Delta x)\mathbb{P}(x< X\leqslant x+\Delta x)}{\mathbb{P}(\mathcal{A}_{i+1,0})}\\
&=\frac{\mathbb{P}(\mathcal{A}_{i+1,0},\mathcal{A}_{i,0}|x< X\leqslant x+\Delta x)\mathbb{P}(x< X\leqslant x+\Delta x)}{\mathbb{P}(\mathcal{A}_{i+1,0})}\\
&=\frac{\mathbb{P}(\mathcal{A}_{i+1,0}|\mathcal{A}_{i,0},x< X\leqslant x+\Delta x)\mathbb{P}(x< X\leqslant x+\Delta x|\mathcal{A}_{i,0})\mathbb{P}(\mathcal{A}_{i,0})}{\mathbb{P}(\mathcal{A}_{i+1,0})}\\
&=2\mathbb{P}(x<X_{i,0}\leqslant x+\Delta x)\mathbb{P}(X_{i,0}^{(1)}<X|\mathcal{A}_{i,0},x<X\leqslant x+\Delta x),
\end{split}\]
where $X_{i,0}^{(1)}$ represents the strength of this player's opponent in the $(i+1)$-th round, whose former state is also $(i,0)$. Notice that
\begin{equation}
\begin{split}
\mathbb{P}(X_{i,0}^{(1)}<x|\mathcal{A}_{i,0},x<X\leqslant x+\Delta x)
&\leqslant\mathbb{P}(X_{i,0}^{(1)}<X|\mathcal{A}_{i,0},x<X\leqslant x+\Delta x)\\
&\leqslant\mathbb{P}(X_{i,0}^{(1)}<x+\Delta x|\mathcal{A}_{i,0},x<X\leqslant x+\Delta x).
\end{split}
\end{equation}
According to (R3), $X_{i,0}^{(1)}$ and $X$ are independent given $\mathcal{A}_{i,0}$, then we have
\begin{equation}
\mathbb{P}(x<X_{i+1,0}\leqslant x+\Delta x)=2\mathbb{P}(x<X_{i,0}\leqslant x+\Delta x)\bigl(\mathbb{P}(X_{i,0}^{(1)}<x)+O(\Delta x)\bigr)\ (\Delta x\rightarrow 0).
\end{equation}
Therefore, by induction we find $X_{i,0}$ is a continuous random variable and its density function satisfies
\begin{equation}
p_{i,0}(x)=2p_{i-1,0}(x)\int_{-\infty}^xp_{i-1,0}(t)dt, \qquad i=1,2,\cdots,m-1.
\end{equation}
Similarly, we have 
\begin{equation}
p_{0,j}(x)=2p_{0,j-1}(x)\int_{x}^{+\infty}p_{0,j-1}(t)dt, \qquad j=1,2,\cdots,n-1.
\end{equation}

(Case b)\ $1\leqslant i\leqslant m-1$ and $1\leqslant j\leqslant n-1$. For this case, notice that
\[\begin{split}
\mathbb{P}(x<X\leqslant  x+\Delta x|\mathcal{A}_{i,j})=&\frac{\mathbb{P}(\mathcal{A}_{i,j},\mathcal{A}_{i-1,j}|x<X\leqslant x+\Delta x)\mathbb{P}(x< X\leqslant x+\Delta x)}{\mathbb{P}(\mathcal{A}_{i,j})}\\
&+\frac{\mathbb{P}(\mathcal{A}_{i,j},\mathcal{A}_{i,j-1}|x<X\leqslant x+\Delta x)\mathbb{P}(x< X\leqslant x+\Delta x)}{\mathbb{P}(\mathcal{A}_{i,j})}
\end{split}\]
Then the rest of the proof is analogous to the (Case a).

(Case c)\ $i=m,0\leqslant j\leqslant n-1$ or $j=n,0\leqslant i\leqslant m-1.$ 
For players in boundary states $(m,j)$ and $(i,n)$, there is only one source: winners from state $(m-1,j)$ and losers from state $(i,n-1)$ respectively. The rest is similar to the first case.
\end{proof}
Deriving distribution functions from density functions gives the following corollary.
\begin{coro}
Let $F_{i,j}(x)$ denote the distribution function the strength of a player who has reached state $(i,j)$ in an arena game. Then we have
\begin{equation}
\left\{
\begin{aligned}
F_{0,0}(x)&=\int_{-\infty}^{x}p(t)\dif t,\\
F_{i,0}(x)&=\bigl(F_{i-1,0}(x)\bigr)^2,\qquad(0 \leqslant i \leqslant m-1)\\
F_{0,j}(x)&=1-\bigl(1-F_{0,j-1}(x)\bigr)^2,\qquad(0 \leqslant j \leqslant n-1)\\
F_{i,j}(x)&=\frac{i}{i+j}\bigl(F_{i-1,j}(x)\bigr)^2+\frac{j}{i+j}\bigl(1-\bigl(1-F_{i,j-1}(x)\bigr)^2\bigr),\\ 
&\quad(1 \leqslant i \leqslant m-1,1 \leqslant j \leqslant n-1)\\
F_{m,j}(x)&=\bigl(F_{m-1,j}(x)\bigr)^2,\qquad(0 \leqslant j \leqslant n-1)\\
F_{i,n}(x)&=1-\bigl(1-F_{i-1,n}(x)\bigr)^2.\qquad(0 \leqslant i \leqslant m-1)
\end{aligned}\right.
\label{e dist of str}
\end{equation}
\label{t dist of str}
\end{coro}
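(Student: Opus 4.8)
The plan is to derive each identity in (\ref{e dist of str}) by integrating the corresponding density recursion of Theorem \ref{t dens of str} from $-\infty$ to $x$. Writing $F_{i,j}(x)=\int_{-\infty}^x p_{i,j}(t)\dif t$ by definition, I would first record the two substitutions that convert the density factors into distribution functions:
\[\int_{-\infty}^x p_{i-1,j}(t)\dif t=F_{i-1,j}(x),\qquad \int_x^{+\infty}p_{i,j-1}(t)\dif t=1-F_{i,j-1}(x),\]
the second holding because each $p_{i,j-1}$ is a genuine probability density (by Theorem \ref{t dens of str}) and therefore integrates to $1$ over its support. The base case $F_{0,0}(x)=\int_{-\infty}^x p(t)\dif t$ is immediate from $p_{0,0}=p$.

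The heart of the argument is the observation that, after these substitutions, every integrand appearing in (\ref{e dens of str}) is an exact derivative. Using $F_{i-1,j}'=p_{i-1,j}$ and $F_{i,j-1}'=p_{i,j-1}$ together with the chain rule, one has
\[\frac{\dif}{\dif s}\bigl(F_{i-1,j}(s)\bigr)^2 = 2 p_{i-1,j}(s) F_{i-1,j}(s),\qquad \frac{\dif}{\dif s}\Bigl(1-\bigl(1-F_{i,j-1}(s)\bigr)^2\Bigr)=2 p_{i,j-1}(s)\bigl(1-F_{i,j-1}(s)\bigr).\]
These are precisely the two summands (up to the constant weights $\tfrac{i}{i+j}$, $\tfrac{j}{i+j}$) occurring on the right-hand sides of the density recursions, so the factor $2$ in each term is exactly what is needed to recognize a perfect square or its complement.

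With these antiderivatives in hand, each case is a direct integration. For the win-type edges $p_{i,0}=2p_{i-1,0}F_{i-1,0}$ and $p_{m,j}=2p_{m-1,j}F_{m-1,j}$, integrating and using $F_{i-1,0}(-\infty)=F_{m-1,j}(-\infty)=0$ gives $F_{i,0}=\bigl(F_{i-1,0}\bigr)^2$ and $F_{m,j}=\bigl(F_{m-1,j}\bigr)^2$. For the loss-type edges I would integrate the second antiderivative, where the lower limit contributes $1-(1-0)^2=0$, producing the complementary-square forms $1-(1-F)^2$. Finally, for the interior range $1\leqslant i\leqslant m-1,\ 1\leqslant j\leqslant n-1$, linearity of the integral lets me handle the two weighted summands separately, yielding
\[F_{i,j}(x)=\frac{i}{i+j}\bigl(F_{i-1,j}(x)\bigr)^2+\frac{j}{i+j}\Bigl(1-\bigl(1-F_{i,j-1}(x)\bigr)^2\Bigr).\]

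The computation is entirely routine, so there is no deep obstacle; the only point demanding a little care is the bookkeeping of the boundary terms at $-\infty$, that is, confirming that each distribution function vanishes there so that only the upper limit survives. This is guaranteed because every $X_{i,j}$ is an honest continuous random variable by Theorem \ref{t dens of str}, whence $F_{i,j}(-\infty)=0$ and $F_{i,j}(+\infty)=1$.
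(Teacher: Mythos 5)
Your proposal is correct and matches the paper's intent exactly: the paper gives no explicit proof beyond the remark ``Deriving distribution functions from density functions gives the following corollary,'' and your integration of each density recursion from Theorem \ref{t dens of str}, recognizing each integrand as the derivative of a square or complementary square, is precisely that derivation spelled out. The care you take with the vanishing boundary terms at $-\infty$ is the only nontrivial bookkeeping, and you handle it correctly.
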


In fact, we can derive the distribution of a player's strength in different states from order statistics. According to (R2), the first round of a run can be viewed as simple random sampling from the original distribution. The states of players with higher strengths turn into (1,0) while the ones of players with lower strengths turn into (0,1). Therefore, we have
\begin{equation}
X_{1,0}\overset{d}{=}max\{X_{0,0}^{(1)},X_{0,0}^{(2)}\},\qquad X_{0,1}\overset{d}{=}min\{X_{0,0}^{(1)},X_{0,0}^{(2)}\}
\end{equation}
respectively, where $X_{0,0}^{(1)},X_{0,0}^{(2)}\ i.i.d \sim p_{0,0}(\cdot)$. Similarly, we have
\begin{equation}
X_{i,0}\overset{d}{=}max\{X_{i-1,0}^{(1)},X_{i-1,0}^{(2)}\},\qquad X_{0,j}\overset{d}{=}min\{X_{0,j-1}^{(1)},X_{0,j-1}^{(2)}\},
\end{equation}
where
\begin{equation}
X_{i-1,0}^{(1)},X_{i-1,0}^{(2)}\ i.i.d \sim p_{i-1,0}(\cdot)\ ,\ X_{0,j-1}^{(1)},X_{0,j-1}^{(2)}\ i.i.d \sim p_{0,j-1}(\cdot).
\end{equation}
For players in state $(i,j)(1\leqslant i\leqslant m-1,1\leqslant j\leqslant n-1)$, the population includes two groups of players that winners from state $(i-1,j)$ and losers from state $(i,j-1)$, whose ratio can be uniquely determined by Theorem \ref{t prob no infer1}. Through this, some results can be easily generalized for competitions among three or more individuals. We do not discuss such case in this paper and leave it for some further research.

Theorem \ref{t prob no infer1} tells us how many people are there in different states. However, what interests us is the probability for a player, whose strength is already known as $x$, to obtain different results. The following theorem answers this.

\begin{thm}
In an $m$-$n$ arena game, suppose X is the random number assigned to a player with the probability density $p_{0,0}(\cdot)$ and x is a real number in the $\Theta$. Let $\mathcal{A}_{i,j}$ and $p_{i,j}(\cdot)$ have the same meaning as the above, then
\begin{equation}
\begin{split}
\mathbb{P}(\mathcal{A}_{m,j}|X=x)&=\mathbb{P}(\mathcal{A}_{m,j})\cdot \frac{p_{m,j}(x)}{p_{0,0}(x)},  \qquad j=0,1,\cdots,n-1,\\
\mathbb{P}(\mathcal{A}_{i,n}|X=x)&=\mathbb{P}(\mathcal{A}_{i,n})\cdot \frac{p_{i,n}(x) }{p_{0,0}(x)}, \qquad i=0,1,\cdots,m-1.
\end{split}
\label{e prob with infer}
\end{equation}
\label{t prob with infer}
\end{thm}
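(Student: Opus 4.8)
The plan is to recognize this statement as nothing more than Bayes' theorem, the only genuine subtlety being that we condition on the measure-zero event $\{X=x\}$ and must therefore pass to a limit. The key input is equation (\ref{e mean of xij}) from the proof of Theorem \ref{t dens of str}, which says that the density $p_{i,j}(\cdot)$ is exactly the conditional density of $X$ given $\mathcal{A}_{i,j}$. In particular, for small $\Delta x>0$ we have $\mathbb{P}(x<X\leqslant x+\Delta x\mid\mathcal{A}_{m,j})=p_{m,j}(x)\,\Delta x+o(\Delta x)$, while $\mathbb{P}(x<X\leqslant x+\Delta x)=p_{0,0}(x)\,\Delta x+o(\Delta x)$, since $p_{0,0}=p$ is the unconditional density of $X$.

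First I would fix the boundary state $(m,j)$ and apply Bayes' theorem to the events $\mathcal{A}_{m,j}$ and $\{x<X\leqslant x+\Delta x\}$:
\[
\mathbb{P}(\mathcal{A}_{m,j}\mid x<X\leqslant x+\Delta x)=\frac{\mathbb{P}(x<X\leqslant x+\Delta x\mid\mathcal{A}_{m,j})\,\mathbb{P}(\mathcal{A}_{m,j})}{\mathbb{P}(x<X\leqslant x+\Delta x)}.
\]
Substituting the two density expansions above and cancelling the common factor $\Delta x$ gives
\[
\mathbb{P}(\mathcal{A}_{m,j}\mid x<X\leqslant x+\Delta x)=\mathbb{P}(\mathcal{A}_{m,j})\,\frac{p_{m,j}(x)+o(1)}{p_{0,0}(x)+o(1)}.
\]
Letting $\Delta x\to 0$ and using continuity of the densities at $x$ together with $p_{0,0}(x)>0$ for $x$ in the support $\Theta$ yields the first identity in (\ref{e prob with infer}). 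The second identity, for the states $(i,n)$, follows by the identical argument with $p_{i,n}$ in place of $p_{m,j}$ and $\mathbb{P}(\mathcal{A}_{i,n})$ in place of $\mathbb{P}(\mathcal{A}_{m,j})$, both supplied by Theorems \ref{t prob no infer1} and \ref{t dens of str}.

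The main obstacle is making the limiting step fully rigorous: one must justify that $\mathbb{P}(\mathcal{A}_{m,j}\mid X=x)$ is well defined as the limit of $\mathbb{P}(\mathcal{A}_{m,j}\mid x<X\leqslant x+\Delta x)$ as $\Delta x\to 0$, and that the $o(\Delta x)$ errors in numerator and denominator genuinely drop out. This is a standard disintegration argument but deserves care precisely because $\{X=x\}$ has probability zero; the cleanest route is to take the displayed ratio as the definition of the regular conditional probability and then check consistency through the factorization $\mathbb{P}(\mathcal{A}_{m,j},\,x<X\leqslant x+\Delta x)=\int_x^{x+\Delta x}\mathbb{P}(\mathcal{A}_{m,j}\mid X=t)\,p_{0,0}(t)\,\dif t$, which reproduces (\ref{e mean of xij}) upon substitution. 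Everything else is routine substitution of quantities already computed in Theorems \ref{t prob no infer1} and \ref{t dens of str}.
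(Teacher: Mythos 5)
Your proposal is correct and follows essentially the same route as the paper: both apply Bayes' theorem to the thickened event $\{x<X\leqslant x+\Delta x\}$, use the identity (\ref{e mean of xij}) to replace $\mathbb{P}(x<X\leqslant x+\Delta x\mid\mathcal{A}_{m,j})$ by $p_{m,j}(x)\Delta x+o(\Delta x)$, and let $\Delta x\to 0$. Your additional remarks on making the disintegration rigorous go slightly beyond what the paper writes down, but the core argument is the same.
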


\begin{proof}
The definition of $\mathcal{A}_{i,j}$, Theorem \ref{t prob no infer1} and \ref{t dens of str} yield
\begin{equation}
\begin{split}
\mathbb{P}(\mathcal{A}_{m,j}|X=x)&=\lim\limits_{\Delta x \rightarrow 0}\frac{\mathbb{P}(\mathcal{A}_{m,j},x<X\leqslant x+\Delta x)}{\mathbb{P}(x<X\leqslant x+\Delta x)}\\
&=\lim\limits_{\Delta x \rightarrow 0} \frac{\mathbb{P}(x<X_{m,j}\leqslant x+\Delta x)\mathbb{P}(\mathcal{A}_{m,j})}{p_{0,0}(x)\Delta x}\\
&=\mathbb{P}(\mathcal{A}_{m,j})\cdot \frac{p_{m,j}(x)}{p_{0,0}(x)},\qquad j=0,1,\cdots ,n-1.
\end{split}
\end{equation}
By the same token, we have
\begin{equation}
\mathbb{P}(\mathcal{A}_{i,n}|X=x)=\mathbb{P}(\mathcal{A}_{i,n})\cdot \frac{p_{i,n}(x) }{p_{0,0}(x) },\qquad i=0,1,\cdots,m-1.
\end{equation}
\end{proof}

Further, we can define a random variable to indicate the probabilities for a player to obtain different results in an $m$-$n$ arena game, given his strength.

\begin{defn}
Suppose $m,n\in \mathbb{N},\lambda\in(0,1)$ and a two-dimensional vector $\xi$ takes values on 
$$\bigl\{(m,j):j=0,1,\cdots,n-1\bigr\} \cup \bigl\{(i,n):i=0,1,\cdots,m-1\bigr\},$$ 
and its probability mass function is given by
\begin{equation}
\mathbb{P}\bigl(\xi=(m,j)\bigr)=\bigl(\frac{1}{2}\bigr)^{m+j}\binom{m+j-1}{m-1}\frac{p_{m,j}(\lambda)}{p_{0,0}(\lambda)},\qquad j=0,1,\cdots,n-1,
\end{equation}
\begin{equation}
\mathbb{P}\bigl(\xi=(i,n)\bigr)=\bigl(\frac{1}{2}\bigr)^{n+i}\binom{n+i-1}{n-1}\frac{p_{i,n}(\lambda)}{p_{0,0}(\lambda)},\qquad i=0,1,\cdots,m-1,
\end{equation}
where the $p_{i,j}(\cdot)$ follows Theorem \ref{t dens of str}. Then we call $\xi$ an \emph{arena random variable} with parameter $\lambda$, $m$, $n$ and $p(\cdot)$, which are called \emph{strength}, \emph{win threshold}, \emph{loss threshold} and \emph{original density} respectively. Simply denote it as $\xi\sim Arena(\lambda;m,n,p)$.
\label{d dist arena}
\end{defn}

It should be emphasized that $\mathbb{P}\bigl(\xi=(m,j)\bigr)$ and $\mathbb{P}\bigl(\xi=(i,n)\bigr)$ represent the probabilities that a player with strength $\lambda$ obtains different results, while $p_{m,j}$ and $p_{i,n}$ the probabilities without any information.

\section{Arena model without fluctuations}

We have got a complete solution to this probability problem, which includes the percentage of players with different results, probability distributions of strengths of players in different states, and the probability for a player to obtain different results with a fixed strength. These results will serve as the basis of the following inferences. Actually, a player's strength always changes little in a relatively short time. As a result, we can give a Bayesian inference on the strength and results of a player, under the assumption that every player has a constant as his strength. We now establish the arena without fluctuations and do that in this Section.

\subsection{Assumptions}

In this part, four assumptions are given, according to which an $m$-$n$ \emph{arena without fluctuations with original density} $p(\cdot)$ can be established.

(A1) In an arena, an infinite number of \emph{runs} can be held among a fixed group of individuals. These individuals are called \emph{players}. All players constitute a countably infinite set $A_{0,0}^{q}=\{a_1,a_2,\cdots\}$, where $a_l$ is the $l$-th player and $q=1,2,\cdots$.

(A2) Each player has an observable \emph{state} $(i,j)\in\varepsilon$ with respect of time and an unobservable constant \emph{strength} $x\in\mathbb{R}$, where
$$\varepsilon=\{(i,j):0\leqslant i\leqslant m,0\leqslant j\leqslant n\}\backslash\{(m,n)\}.$$
Denote the strength of the $l$-th player by $X_l$. Assume $X_1,X_2,\cdots\,X_n,\cdots$ are independent and identically distributed, supported on $\Theta$, and their density function is $p(x)$.

(A3) Let $A_{i,j}^{q}$ denote the set of players whose states are $(i,j)$ after $(i+j)$-th round in the $q$-th run. If
$$0\leqslant i\leqslant m-1,0\leqslant j\leqslant n-1,$$
then the system will \emph{randomly assign} him an opponent $a_{l'}$ from $A_{i,j}^{q}$. If $X_{l}>X_{l'}$, then let
$$a_l\in A_{i+1,j}^{q}, a_{l'}\in A_{i,j+1}^{q}.$$
Otherwise, let
$$a_l\in A_{i,j+1}^{q}, a_{l'}\in A_{i+1,j}^{q}.$$

(A4) If a player's state satisfies $i=m$ or $j=n$ in the $q$-th run, then we say the player's $q$-th run is over and the $(i,j)$ is called his \emph{result} of the $q$-th run. When all players' $q$-th runs are over, a new run will start according to (A3). At the same time, their numbers of runs $q$ plus one.

In a word, an $m$-$n$ arena without fluctuations consists of infinitely many $m$-$n$ arena games, where players' constant strengths will not be affected by their previous results. As we stated about arena games before, no ties happen in an arena without fluctuations as well, due to Assumption (A2). In (A3), we mentioned that the assignment is random. What should a random assignment be? We all know that assignments in a round are idempotent; that is, if player A's opponent is player B, then player B's opponent must be player A. Besides, the term random here is not the one in the strict sense since a uniform random variable cannot be supported on a countably infinite set. Hence, we define the matching system here.

\begin{defn}
$\sigma:A\rightarrow A$ is called a matching map on a nonempty set A if
$$\forall a\in A,\sigma(a)\neq a,\sigma\bigl(\sigma(a)\bigr)=a.$$
\end{defn}
\begin{defn}
$\sigma:\Omega\times A_M\rightarrow A_M$ is called a random matching on a finite set $A_M$ ($|A_M|=M$ is even) in the probability space $(\Omega,\mathcal{F},\mathbb{P})$, if $\forall \omega\in\Omega$, $\sigma(\omega,\cdot)$ is a matching map on $A_M$ and
\begin{equation}
\forall a\in A_M,\forall a'\in A_M\backslash\{a\},\mathbb{P}\bigl(\sigma(a)=a'\bigr)=\frac{1}{M-1}.
\end{equation}
\label{d matc fty}
\end{defn}

In other words, a random matching can be viewed as a stochastic process in the probability space $(\Omega,\mathcal{F},\mathbb{P})$. Both the index set and state space are $A_M$. Now we regard the random matching on a countably infinite set $A$ as a pro forma limit of the one on finite set $A_M$, which means $\mathbb{P}\bigl(\sigma(a)=a'\bigr)$ could be infinitely small. Hence, the random matching on a countably infinite set A is supposed to satisfy $\mathbb{P}(\sigma(a)=a')=0$ though strictly speaking $\sigma$ is no longer a stochastic process.

\begin{defn}
Assume $A$ is a countably infinite set and $\sigma$ is called a random matching on A in the probability space $(\Omega,\mathcal{F},\mathbb{P})$ if
\begin{equation}
\forall a,a'\in A,\mathbb{P}\bigl(\sigma(a)=a'\bigr)=0\ and \ \mathbb{P}\bigl(\sigma\bigl(\sigma(a)\bigr)=a\bigr)=1.
\end{equation}
\label{d matc infty}
\end{defn}

We have assumed that every player has an unobservable constant as his strength. According to Assumption (A1), (A3), (A4) and Definition \ref{d matc infty}, the opponents of a player in all rounds are different with probability one, and their strengths are merely related to the state of the player. A Markov chain emerges if we consider the transition of the states of a player. Before proving it, we define a specific Markov chain.

\begin{defn}
Let $\mathbf{S}=(S_{t},t\geqslant1)$ be a Markov chain whose state space is
$$\varepsilon=\bigl\{(i,j):0\leqslant i\leqslant m,0\leqslant j\leqslant n\bigr\}\backslash\bigl\{(m,n)\bigr\},$$
where m,n are positive integers and its probability transfer function satisfies:\par
if $i=m$ or $j=n$, then
\begin{equation}
\mathbb{P}\bigl(S_{k+1}=(0,0)|S_{k}=(i,j)\bigr)=1,
\label{e proc arena bound}
\end{equation}

if $0\leqslant i\leqslant m-1$ or $0\leqslant j\leqslant n-1$, then
\begin{equation}
\begin{split}
\mathbb{P}\bigl(S_{k+1}=(i+1,j)|S_{k}=(i,j)\bigr)&=F_{i,j}(\lambda),\\
\mathbb{P}\bigl(S_{k+1}=(i,j+1)|S_{k}=(i,j)\bigr)&=1-F_{i,j}(\lambda),
\end{split}
\label{e proc arena not bound}
\end{equation}
where $F_{i,j}(\cdot)$ follows (\ref{e dist of str}). Then we call $\mathbf{S}$ an $m$-$n$ arena process without fluctuations with parameter $\lambda$ and original distribution $F(\cdot)$, where m,n are called win threshold and loss threshold respectively. Simply write
$$\mathbf{S}\sim A(\lambda;m,n,F).$$
\label{d proc arena}
\end{defn}

\subsection{Bayesian inference on results in arenas without fluctuations}

One of our goals is to infer a player's future performance from his past performance. Its significance is supported by many practical examples, such as sports game like FIFA World Cup and population competitions in Biology. In this part, we only consider the constant strengths of players and exclude unstable factors, although ``chance'' does contribute the performance of players. The improve work of that will be done in the next Section.

\begin{thm}
In an m-n arena without fluctuations with original density $p(\cdot)$, a player's states are recorded by
$$\mathbf{S}=\{S_t=(i_t,j_t),t\geqslant 1\}.$$
Suppose $X$ is the strength of this player, $\lambda\in\Theta$, and $F(x)=\int_{-\infty}^{x}p(t)\dif t$, then we have
$$\mathbf{S}|X=\lambda\sim A(\lambda;m,n,F).$$
\label{t proc arena}
\end{thm}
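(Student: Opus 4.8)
The plan is to verify the two defining features of the chain $A(\lambda;m,n,F)$ in Definition \ref{d proc arena}: the Markov property together with the prescribed one-step transition probabilities. Because those transitions depend only on the current state and on $\lambda$, it suffices to compute, for each state $(i,j)\in\varepsilon$, the conditional law of $S_{k+1}$ given $S_k=(i,j)$ and the full past $S_1,\dots,S_{k-1}$, and to check that it agrees with \eqref{e proc arena bound}--\eqref{e proc arena not bound} and is free of the past.

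First I would dispose of the boundary states. If $S_k=(i,j)$ with $i=m$ or $j=n$, then by (A4) the player's current run has ended; once every player's run is over, a fresh run starts with all states reset to $(0,0)$ according to (A1)--(A3). Hence $S_{k+1}=(0,0)$ deterministically, which matches \eqref{e proc arena bound} and plainly does not depend on the past.

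The heart of the argument is the non-boundary case $0\le i\le m-1$, $0\le j\le n-1$. When the tracked player occupies state $(i,j)$, (A3) assigns him an opponent $a_{l'}$ selected by the random matching of Definition \ref{d matc infty} from the set $A_{i,j}^{q}$ of players presently in state $(i,j)$; he advances to $(i+1,j)$ if his strength $\lambda$ exceeds the opponent's strength $X_{l'}$ and falls to $(i,j+1)$ otherwise. The key step is to show that, conditionally on $X=\lambda$ and on the player having reached $(i,j)$, the variable $X_{l'}$ has density $p_{i,j}(\cdot)$ and is independent of the player's own past trajectory. This rests on three facts: by (A2) the strengths are i.i.d., so a distinct opponent's strength is independent of the player's $\lambda$; by Definition \ref{d matc infty} the opponents met in successive rounds are almost surely distinct individuals, so $X_{l'}$ is independent of the strengths, and hence of the states, of all earlier opponents; and by Theorem \ref{t dens of str} the conditional law of a player's strength given membership in $A_{i,j}^{q}$ is exactly $p_{i,j}$. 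Granting this, and using that $p_{i,j}$ is continuous so that $\mathbb{P}(X_{l'}=\lambda)=0$,
\[
\mathbb{P}\bigl(S_{k+1}=(i+1,j)\mid S_k=(i,j),\,X=\lambda\bigr)=\mathbb{P}(X_{l'}<\lambda)=F_{i,j}(\lambda),
\]
with complementary probability $1-F_{i,j}(\lambda)$ for the transition to $(i,j+1)$, precisely as in \eqref{e proc arena not bound}; here $F_{i,j}$ is the distribution function attached to $p_{i,j}$ by Corollary \ref{t dist of str}. As this conditional law depends only on $(i,j)$ and $\lambda$, the Markov property follows and the transition probabilities coincide with Definition \ref{d proc arena}, yielding $\mathbf{S}\mid X=\lambda\sim A(\lambda;m,n,F)$.

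I expect the main obstacle to lie in making the non-boundary step fully rigorous: one must confirm that conditioning on the tracked player's own arrival at $(i,j)$ does not perturb the opponent's state-conditional density $p_{i,j}$, and that the idealized matching on a countably infinite set, where $\mathbb{P}(\sigma(a)=a')=0$, nonetheless furnishes an opponent whose strength genuinely behaves like an independent sample from $p_{i,j}$. This is exactly the point at which Definition \ref{d matc infty} and the i.i.d. hypothesis (A2) must be wielded with care; once the conditional independence and the law of $X_{l'}$ are secured, the remaining computations are routine.
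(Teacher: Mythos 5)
Your proposal is correct and follows essentially the same route as the paper's proof: the boundary case is handled deterministically via (A4), and the non-boundary case rests on the almost-sure distinctness of opponents under Definition \ref{d matc infty}, the i.i.d.\ hypothesis (A2), and the identification of the opponent's conditional law with $p_{i,j}$ from Theorem \ref{t dens of str} and Corollary \ref{t dist of str}, giving the transition probability $F_{i_k,j_k}(\lambda)$. The subtlety you flag at the end is precisely what the paper addresses with its equation stating that, conditionally on the observed trajectory, the current matching differs from all earlier ones with probability one.
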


\begin{proof}
Let $F_{i,j}(x)$ have the same meaning in Corollary \ref{t dist of str}. For given $(i_1,j_1),(i_2,j_2),\cdots,(i_k,j_k)$ satisfying
\begin{equation}
\begin{cases}
i_t=j_t=0, &i_{t-1}=m\ or\ j_{t-1}=n,\\
i_t-i_{t-1}=1\ or\ j_t-j_{t-1}=1,&otherwise,
\end{cases}
\end{equation}
if $i_k=m$ or $j_k=n$, then according to Assumption (A4),
$$\mathbb{P}\bigl(S_{k+1}=(0,0)|S_k=(i_k,j_k),X=\lambda\bigr)=\mathbb{P}\bigl(S_{k+1}=(0,0)|S_k=(i_k,j_k),\cdots,S_1=(i_1,j_1),X=\lambda\bigr)=1.$$
If $0\leqslant i_k\leqslant m-1$ or $0\leqslant j_k\leqslant n-1$, define $T=\{0\leqslant t\leqslant k-1:0\leqslant i_t\leqslant m-1,0\leqslant j_t\leqslant n-1\}$. For $t\in T\cup\{k\}$, let
$$\sigma_{i_t,j_t}^{q_t}:\Omega\times A_{i_t,j_t}^{q_t}\rightarrow A_{i_t,j_t}^{q_t},$$
be a random matching on $A_{i_t,j_t}^{q_t}$. According to Assumptions (A1), (A3) and Definition \ref{d matc infty},
\begin{equation}
\mathbb{P}\left(\forall t\in T,\sigma_{i_t,j_t}^{q_t}\neq\sigma_{i_k,j_k}^{q_k}\middle|S_k=(i_k,j_k),\cdots,S_1=(i_1,j_1),X=\lambda\right)=1.
\label{e match condition}
\end{equation}
Assumption (A2) and (\ref{e match condition}) yield
\[\begin{split}
&\mathbb{P}\bigl(S_{k+1}=(i_k+1,j_k)|S_k=(i_k,j_k),\cdots,S_1=(i_1,j_1),X=\lambda\bigr)\\
=&\mathbb{P}\left(X>X_{\sigma_{i_k,j_k}^{q_k}}\middle|S_k=(i_k,j_k),\cdots,S_1=(i_1,j_1),X=\lambda\right)\\
=&\mathbb{P}\left(X>X_{\sigma_{i_k,j_k}^{q_k}},\forall t\in T,\sigma_{i_t,j_t}^{q_t}\neq\sigma_{i_k,j_k}^{q_k}\middle|S_k=(i_k,j_k),\cdots,S_1=(i_1,j_1),X=\lambda\right)\\
=&\mathbb{P}\left(X_{\sigma_{i_k,j_k}^{q_k}}<\lambda\right).
\end{split}\]
Corollary \ref{t dist of str} gives the distributions of strengths of players in different states in an $m$-$n$ arena game. According to Assumption (A1) and (A4), the distributions of the strengths of players in an $m$-$n$ arena is the same as that result. It follows that
\begin{equation}
\begin{split}
&\mathbb{P}\bigl(S_{k+1}=(i_k+1,j_k)|S_k=(i_k,j_k),\cdots,S_1=(i_1,j_1),X=\lambda\bigr)\\
=&\mathbb{P}\bigl(S_{k+1}=(i_k+1,j_k)|S_k=(i_k,j_k),X=\lambda\bigr)=F_{i_k,j_k}(\lambda),
\end{split}
\end{equation}
where $F_{i,j}$ follows (\label{e dist of str}). Similarly,
\begin{equation}
\begin{split}
&\mathbb{P}\bigl(S_{t+1}=(i_t,j_t+1)|S_k=(i_t,j_t),\cdots,S_1=(i_1,j_1),X=\lambda\bigr)\\
=&\mathbb{P}\bigl(S_{t+1}=(i_t,j_t+1)|S_t=(i_t,j_t),X=\lambda\bigr)=1-F_{i_t,j_t}(\lambda).
\end{split}
\end{equation}
In conclusion, by Definition \ref{d proc arena} we have 
$$\mathbf{S}|X=\lambda\sim A(\lambda;m,n,F).$$
\end{proof}

In Assumption (A1), the original density characterizes the strengths of all players. Hence, the original density is an appropriate choice of our prior distribution. Furthermore, by choosing the original density as the prior distribution, we can derive an invariant estimator which is independent of it. We first give the posterior distribution of a player's strength.

\begin{thm}
Suppose a player plays $k$ rounds in an $m$-$n$ arena without fluctuations with original density $p(\cdot)$ and his states turn successively into
$$\tilde{x}=\bigl((i_{1},j_{1}),(i_{2},j_{2}),\cdots,(i_{k},j_{k})\bigr).$$
Choose $p(\cdot)$ to be the prior distribution of his strength X, then the posterior distribution of X is 
\begin{equation}
\pi(\lambda|\tilde{x})=\frac{\prod\limits_{t\in I_1}\Bigl[(i_{t}-i_{t-1})F_{i_{t-1},j_{t-1}}(\lambda)+(j_{t}-j_{t-1})\bigl(1-F_{i_{t-1},j_{t-1}}(\lambda)\bigr)\Bigr]p(\lambda)}{\int_\Theta \prod\limits_{t\in I_1}\Bigl[(i_{t}-i_{t-1})F_{i_{t-1},j_{t-1}}(\lambda)+(j_{t}-j_{t-1})\bigl(1-F_{i_{t-1},j_{t-1}}(\lambda)\bigr)\Bigr]p(\lambda) \dif \lambda},
\label{e post dest}
\end{equation}
where $F_{i,j}$ follows (\ref{e dist of str}) and $I_1=\{1\leqslant t \leqslant k:i_{t}>i_{t-1}\ or\ j_{t}>j_{t-1}\}.$
\label{t post dest}
\end{thm}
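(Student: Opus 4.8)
The plan is to apply Bayes' theorem, using $p(\cdot)$ as the prior density of the strength $X$ and computing the likelihood of the observed path $\tilde{x}$ from the Markov structure established in Theorem \ref{t proc arena}. Since the posterior density is proportional to the product of the likelihood and the prior, with normalizing constant obtained by integrating over $\Theta$, the whole task reduces to evaluating the likelihood $\mathbb{P}(\tilde{x}\mid X=\lambda)$ and recognizing it as the product in the numerator of (\ref{e post dest}).

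First I would invoke Theorem \ref{t proc arena}: conditional on $X=\lambda$, the state sequence $\mathbf{S}$ is the arena process $A(\lambda;m,n,F)$. Adopting the convention $(i_0,j_0)=(0,0)$ for the state at which each run begins, the Markov property turns the joint probability of the path into a product of one-step transition probabilities,
\[
\mathbb{P}(\tilde{x}\mid X=\lambda)=\prod_{t=1}^{k}\mathbb{P}\bigl(S_t=(i_t,j_t)\mid S_{t-1}=(i_{t-1},j_{t-1}),X=\lambda\bigr),
\]
each factor read off from Definition \ref{d proc arena}.

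Next I would split the $k$ steps into two kinds. If $(i_{t-1},j_{t-1})$ is a boundary state, i.e. $i_{t-1}=m$ or $j_{t-1}=n$, then by (\ref{e proc arena bound}) the chain resets to $(0,0)$ with probability one; such a step has $i_t=j_t=0$, so neither coordinate increases, hence $t\notin I_1$ and its factor is exactly $1$. Every other step starts from an interior state, so by (\ref{e proc arena not bound}) it is a genuine round in which precisely one coordinate increases by one: a win contributes $F_{i_{t-1},j_{t-1}}(\lambda)$ and a loss contributes $1-F_{i_{t-1},j_{t-1}}(\lambda)$, and these are exactly the indices $t\in I_1$. For such $t$ the increments satisfy $i_t-i_{t-1},\,j_t-j_{t-1}\in\{0,1\}$ with exactly one of them equal to $1$, so the compact expression
\[
(i_t-i_{t-1})F_{i_{t-1},j_{t-1}}(\lambda)+(j_t-j_{t-1})\bigl(1-F_{i_{t-1},j_{t-1}}(\lambda)\bigr)
\]
evaluates to $F_{i_{t-1},j_{t-1}}(\lambda)$ on a win and to $1-F_{i_{t-1},j_{t-1}}(\lambda)$ on a loss, reproducing each competitive factor. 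Discarding the probability-one reset factors leaves $\mathbb{P}(\tilde{x}\mid X=\lambda)=\prod_{t\in I_1}[\,\cdots\,]$; substituting this likelihood together with the prior $p(\lambda)$ into Bayes' theorem and normalizing over $\Theta$ yields (\ref{e post dest}).

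The step I expect to be the main obstacle is the careful bookkeeping of these two transition types: one must verify that every boundary-reset step contributes a factor of exactly $1$ and is genuinely excluded from $I_1$ (so that runs beyond the first do not corrupt the product), and must fix the convention $(i_0,j_0)=(0,0)$ so that the $t=1$ factor is handled correctly. A secondary point is that the factorization into one-step probabilities rests on the Markov property, i.e. that conditioning on the full history agrees with conditioning on the immediately preceding state, which is precisely the content of Theorem \ref{t proc arena} and may therefore be cited rather than re-proved.
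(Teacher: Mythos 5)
Your proposal is correct and follows essentially the same route as the paper: invoke Theorem \ref{t proc arena} to identify the conditional law of the state sequence as $A(\lambda;m,n,F)$, factor the likelihood into one-step transition probabilities via the Markov property, and apply Bayes' theorem, with the reset steps contributing unit factors so that only $t\in I_1$ survives. Your explicit bookkeeping of the boundary-reset transitions and the $(i_0,j_0)=(0,0)$ convention is merely a more careful spelling-out of what the paper leaves implicit.
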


\begin{proof}
Suppose $S_t$ is the state of this player after his $(t-1)$-th round and $\mathbf{S}=(S_t,t\geqslant 1)$. Based on Theorem \ref{t proc arena}, we have
$$\mathbf{S}|X=\lambda\sim A(\lambda;m,n,F).$$
As a result, the posterior distribution of $X$ is 
\[\begin{split}
\pi(\lambda|\tilde{x})&=\frac{p(\tilde{x}|\lambda)\pi(\lambda)}{\int_\Theta p(\tilde{x}|\lambda)\pi(\lambda) \dif \lambda}\\
&=\frac{\prod \limits_{t=1}^{k}\mathbb{P}\left(S_{t}=(i_{t},j_{t})\middle|S_{t-1}=(i_{t-1},j_{t-1}),X=\lambda)p(\lambda\right)}{\int_{\Theta}\prod \limits_{t=1}^{k}\mathbb{P}\left(S_{t}=(i_{t},j_{t})\middle|S_{t-1}=(i_{t-1},j_{t-1}),X=\lambda\right)p(\lambda)\dif \lambda}\\
&=\frac{\prod\limits_{t\in I_1}\Bigl[(i_{t}-i_{t-1})F_{i_{t-1},j_{t-1}}(\lambda)+(j_{t}-j_{t-1})\bigl(1-F_{i_{t-1},j_{t-1}}(\lambda)\bigr)\Bigr]p(\lambda)}{\int_\Theta \prod\limits_{t\in I_1}\Bigl[(i_{t}-i_{t-1})F_{i_{t-1},j_{t-1}}(\lambda)+(j_{t}-j_{t-1})\bigl(1-F_{i_{t-1},j_{t-1}}(\lambda)\bigr)\Bigr]p(\lambda) \dif \lambda}.
\end{split}\]
\end{proof}

The word round here is different from the one in our assumptions. On the one hand, the round in our assumptions is a finite value in a run. The round here represents a value that can be arbitrarily large and does not return to zero even though a run ends. On the other hand, state (0,0) is deliberately inserted before the start of a run.

The next theorem shows a significant property of an arena without fluctuations. The posterior predictive distribution usually relies on the prior distribution chosen by us. Nevertheless, the posterior distribution of a player's future results in an arena without fluctuations is irrelevant to the prior distribution of player's strength $p(\cdot)$.

\begin{thm}
Choose $p(\cdot)$ in Assumption (A1) to be the prior distribution of a player's strength $X$. Given $k$ successive states of a player
$$\tilde{x}=(x_{1},x_{2},\cdots,x_{k})=\bigl((i_{1},j_{1}),(i_{2},j_{2}),\cdots,(i_{k},j_{k})\bigr),$$
then the next $r$ states of this player $(x_{k+1},x_{k+2},\cdots,x_{k+r})$ has a posterior predictive distribution irrelevant to $p(\cdot)$.
\label{t post dist invari}
\end{thm}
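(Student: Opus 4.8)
The plan is to write the posterior predictive distribution as an integral of the conditional likelihood of the future states against the posterior density from Theorem~\ref{t post dest}, and then to show that a single change of variable $u=F(\lambda)$ eliminates every trace of $p(\cdot)$.

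First I would express the quantity of interest. By the law of total probability, and using that the future states are conditionally independent of the past given $X=\lambda$ (Theorem~\ref{t proc arena}),
\begin{equation}
\mathbb{P}\bigl(x_{k+1},\dots,x_{k+r}\mid\tilde{x}\bigr)=\int_\Theta \mathbb{P}\bigl(x_{k+1},\dots,x_{k+r}\mid X=\lambda\bigr)\,\pi(\lambda\mid\tilde{x})\,\dif\lambda .
\end{equation}
By Theorem~\ref{t proc arena} the conditional likelihood $\mathbb{P}(x_{k+1},\dots,x_{k+r}\mid X=\lambda)$ factors into a product of one-step transition probabilities, each equal to $F_{i,j}(\lambda)$, to $1-F_{i,j}(\lambda)$, or to $1$ (the last for the forced reset to $(0,0)$ at a boundary state). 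Substituting the explicit posterior $\pi(\lambda\mid\tilde{x})$ from (\ref{e post dest}), the predictive distribution becomes a ratio of two integrals over $\Theta$, in each of which the integrand is a product of factors of the form $F_{i,j}(\lambda)$ or $1-F_{i,j}(\lambda)$ multiplied by $p(\lambda)$.

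The key step is a structural observation about the functions $F_{i,j}$. I claim that for each $(i,j)\in\varepsilon$ there is a fixed polynomial $G_{i,j}$, depending only on $i,j,m,n$ and not on $p(\cdot)$, such that $F_{i,j}(x)=G_{i,j}\bigl(F(x)\bigr)$, where $F=F_{0,0}$ is the original distribution function. This follows by induction on $i+j$ from the recursions (\ref{e dist of str}): the base case is $G_{0,0}(u)=u$, and each recursion expresses $F_{i,j}$ as a polynomial in $F_{i-1,j}$ and $F_{i,j-1}$ with coefficients $i/(i+j)$ and $j/(i+j)$ that make no reference to $p$, so the inductive hypothesis propagates the representation upward through both the interior and the boundary recursions.

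With this in hand, I would perform the substitution $u=F(\lambda)$, valid because $p(\cdot)>0$ on $\Theta$ makes $F$ a strictly increasing bijection from $\Theta$ onto $(0,1)$, so that $\dif u=p(\lambda)\,\dif\lambda$. After the substitution every factor $F_{i,j}(\lambda)$ becomes $G_{i,j}(u)$, every $1-F_{i,j}(\lambda)$ becomes $1-G_{i,j}(u)$, and the prior density $p(\lambda)\,\dif\lambda$ is absorbed into $\dif u$, while the limits become $0$ and $1$. Hence both numerator and denominator are integrals over $(0,1)$ of products of the universal polynomials $G_{i,j}$, determined entirely by the observed history $\tilde{x}$, the future states $(x_{k+1},\dots,x_{k+r})$, and the thresholds $m,n$; no dependence on $p(\cdot)$ survives, and the ratio is therefore invariant with respect to the prior, as claimed. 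I expect the main obstacle to be clean bookkeeping of the two kinds of factors — the transition probabilities of the future path and the posterior weights from the past path — and, in particular, verifying the polynomial representation $F_{i,j}=G_{i,j}(F)$ uniformly across all cases of (\ref{e dist of str}); the change of variable itself is routine once monotonicity of $F$ is noted.
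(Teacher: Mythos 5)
Your proposal is correct and follows essentially the same route as the paper's proof: express the posterior predictive as a ratio of integrals of products of transition probabilities, observe by induction on the recursion (\ref{e dist of str}) that $F_{i,j}(\lambda)=g_{i,j}\bigl(F(\lambda)\bigr)$ for polynomials $g_{i,j}$ independent of $p(\cdot)$, and substitute $u=F(\lambda)$ to reduce everything to integrals of fixed polynomials over $(0,1)$. Your explicit remark that $F$ must be a strictly increasing bijection onto $(0,1)$ for the change of variable is a small point of care the paper leaves implicit, but the argument is otherwise identical.
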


\begin{proof}
Denote the states sequence of this player by $\mathbf{S}=(S_t,t\geqslant 1).$ Theorem \ref{t proc arena} gives
$$\mathbf{S}|X=\lambda\sim A(\lambda;m,n,F).$$
Therefore, we have 
\begin{equation}
\begin{split}
&\mathbb{P}(S_{t+1}=(i_{k+1},j_{k+1}),\cdots,S_{k+r}=(i_{k+r},j_{k+r})|\tilde{x})\\
=&\int_{\Theta}\mathbb{P}(S_{k+1}=(i_{k+1},j_{k+1}),\cdots,S_{k+r}=(i_{k+r},j_{k+r})|\tilde{x},X=\lambda)\pi(\lambda|\tilde{x})\dif \lambda\\
=&\int_{\Theta}\prod\limits_{t=k}^{k+r-1}\mathbb{P}(S_{t+1}=(i_{t+1},j_{t+1})|S_t=(i_t,j_t),X=\lambda)\pi(\lambda|\tilde{x})\dif \lambda,
\end{split}
\label{e post prob1}
\end{equation}
Notice that if $i_{t}=m$ or $j_{t}=n$,
\begin{equation}
\mathbb{P}\bigl(S_{t+1}=(i_{t+1},j_{t+1})|S_t=(i_{t},j_{t})\bigr)=\begin{cases}
1,&i_{t+1}=j_{t+1}=0,\\
0,&otherwise.
\end{cases}
\label{e cond prob of proc1}
\end{equation}
If $0\leqslant i_t\leqslant m-1$ and $0\leqslant j_t\leqslant n-1$,
\begin{equation}
\mathbb{P}\bigl(S_{t+1}=(i_{t+1},j_{t+1})|S_t=(i_{t},j_{t})\bigr)=
\begin{cases}
F_{i_{t},j_{t}}(\lambda),&i_{t+1}=i_t+1\ and \ j_{t+1}=j_t,\\
1-F_{i_{t},j_{t}}(\lambda),&i_{t+1}=i_t\ and \ j_{t+1}=j_t+1,\\
0,&otherwise.
\end{cases}
\label{e cond prob of proc2}
\end{equation}
According to Corollary \ref{t dist of str}, it is easy to prove by induction that function $F_{i,j}(\cdot)$ can be expressed in the form
\begin{equation}
F_{i,j}(\lambda)=g_{i,j}\bigl(F(\lambda)\bigr),\qquad (0\leqslant i\leqslant m,0\leqslant j\leqslant n)
\label{e dist poly}
\end{equation}
where $g_{i,j}(\cdot)$ is a polynomial function. Substituting (\ref{e post dest}), (\ref{e cond prob of proc1}), (\ref{e cond prob of proc2}), and (\ref{e dist poly}) into (\ref{e post prob1}) yields
\[
\begin{split}
&\mathbb{P}\left(S_{t+1}=(i_{k+1},j_{k+1}),\cdots,S_{k+r}=(i_{k+r},j_{k+r})\middle|\tilde{x}\right)\\
=&\frac{\int_{\Theta}\prod\limits_{t\in I_2}\Bigl\{(i_{t}-i_{t-1})g_{i_{t-1},j_{t-1}}\bigl(F(\lambda)\bigr)+(j_{t}-j_{t-1})\Bigl[1-g_{i_{t-1},j_{t-1}}\bigl(F(\lambda)\bigr)\Bigr]\Bigr\}p(\lambda)\dif  \lambda}{\int_{\Theta}\prod\limits_{t\in I_1}\Bigl\{(i_{t}-i_{t-1})g_{i_{t-1},j_{t-1}}\bigl(F(\lambda)\bigr)+(j_{t}-j_{t-1})\Bigl[1-g_{i_{t-1},j_{t-1}}\bigl(F(\lambda)\bigr)\Bigr]\Bigr\}p(\lambda)\dif  \lambda}\\
=&\frac{\int_{0}^{1}\prod\limits_{t\in I_2}\Bigl[(i_{t}-i_{t-1})g_{i_{t-1},j_{t-1}}(u)+(j_{t}-j_{t-1})\bigl(1-g_{i_{t-1},j_{t-1}}(u)\bigr)\Bigr]\dif  u}{\int_{0}^{1}\prod\limits_{t\in I_1}\Bigl[(i_{t}-i_{t-1})g_{i_{t-1},j_{t-1}}(u)+(j_{t}-j_{t-1})\bigl(1-g_{i_{t-1},j_{t-1}}(u)\bigr)\Bigr]\dif  u},
\end{split}
\]
where
$$I_1=\{1\leqslant t \leqslant k:i_{t}>i_{t-1}\ or\ j_{t}>j_{t-1}\},$$
$$I_2=\{1\leqslant t \leqslant k+r:i_{t}>i_{t-1}\ or\ j_{t}>j_{t-1}\}.$$
Since $g_{i_{t},j_{t}}(\cdot)$ is irrelevant to $u=F(\lambda)$, $p_r(x_{k+1},\cdots,x_{k+r}|\tilde{x})$ can be computed by the integral of polynomials, which completes the proof.
\end{proof}

\section{Arena model with fluctuations in the infinite case}

The previous arena strictly follows the jungle law---one who has higher strength wins the game. Nevertheless, accidents may happen in the real world. For example, it is possible that a green hand wins a veteran due to his good luck. Hence, an improved model considering fluctuations should be established. Four assumptions are set here to establish an $m$-$n$ \emph{arena with fluctuations}:

(A1) In an arena, an infinite number of \emph{runs} can be held among a fixed group of individuals, and these individuals are called \emph{players}. All players constitute a countably infinite set $A_{0,0}^{q}=\{a_1,a_2,\cdots\}$, where $a_l$ is the $l$-th player and $q=1,2,\cdots$.

(A2') For each player, there is an observable $(i,j)\in\varepsilon$ as his \emph{state} with respect of time and an unobservable constant $x\in\mathbb{R}$ as his \emph{strength}, where
$$\varepsilon=\{(i,j):0\leqslant i\leqslant m,0\leqslant j\leqslant n\}\backslash\{(m,n)\}.$$
Denote the strength of the $l$-th player as $X_l$. Assume $X_1,X_2,\cdots\,X_n,\cdots $ are independent and identically distributed, and their density function is $p(x)$. Call
\begin{equation}
X_l^{q,k}=X_l+\frac{\rho_l}{\sqrt{2}}\epsilon_l^{q,k}
\end{equation}
the performance of the $l$-th player in the $k$-th round of his $q$-th run, where $\rho_l>0$ is an unknown value called the coefficient of fluctuations of the $l$-th player and $\epsilon_{l}^{q,k}$ is the relative fluctuations of the $l$-th player in the $k$-th round of the $q$-th run. Assume 
$$\epsilon_l^{1,1},\epsilon_l^{1,2},\cdots,\epsilon_l^{2,1},\epsilon_l^{2,2},\cdots,\epsilon_l^{3,1},\epsilon_l^{3,2},\cdots i.i.d \sim N(0,1)$$
and $X_l,\epsilon_l^{q,k},\epsilon_{l'}^{q',k'}$ are mutually independent for arbitrary $q,q', k,k'$ and $l\neq l'$.

(A3') Let $A_{i,j}^{q}$ denote the set of players whose states are $(i,j)$ after $(i+j)$-th round in the $q$-th run. If
$$0\leqslant i\leqslant m-1,0\leqslant j\leqslant n-1,$$
then the system will \emph{randomly assign} an opponent $a_{l'}$ from $A_{i,j}^{q}$ to him. If $X_{l}^{q,i+j+1}>X_{l'}^{q,i+j+1}$, then let
$$a_l\in A_{i+1,j}^{q}, a_{l'}\in A_{i,j+1}^{q}.$$
Otherwise, let
$$a_l\in A_{i,j+1}^{q}, a_{l'}\in A_{i+1,j}^{q}.$$

(A4') If a player's state satisfies $i=m$ or $j=n$ in the $q$-th run, then we say the player's $q$-th run is over and this state $(i,j)$ is called his \emph{result} of the $q$-th run. When all players' $q$-th runs are over, a new run will start according to (A3'). At the same time, their numbers of runs $q$ plus one.

We also think there are no ties due to Assumption (A2'). Specifically, an arena with fluctuations is called an \emph{arena with uniform fluctuations} if
\begin{equation}
\forall l\in \mathbb{N},\ \rho_l=\rho>0.
\end{equation}
In this case, $\rho$ is called the coefficient of fluctuations. This coefficient depicts the variance of fluctuations and reflects the ``fairness'' of a competition. Here ``fairness'' refers to a great probability that a player with high strength wins the one with low strength; in other words, every player deserves for his results according to his strength. It is easy to understand this concept by comparing two games that chess and finger-guessing. For simplicity, only 1-1 arena with uniform fluctuations is studied in this paper, where a player's result of a run is either win (his result is (1,0) and simply write 1) or loss (his result is (0,1) and simply write 0).

In a 1-1 arena, a player is likely to meet any other players because they all compete in the same state (0,0). In this part, we discuss the estimation of the coefficient of fluctuations based on our four basic assumptions. We call this part infinite case because the population of players is a countably infinite set according to Assumption (A1). In this case, we can express the result of the $l$-th player in his $k$-th round in the form
\begin{equation}
I_{lk}=\mathbf{1}\left\{X_l+\frac{\rho}{\sqrt{2}}\epsilon_{l,k}>X_{\sigma_k(l)}+\frac{\rho}{\sqrt{2}}\epsilon_{\sigma_k(l),k}\right\},
\label{e defn Ilk}
\end{equation}
where $\sigma_k$ is a random matching on $A_{0,0}^{q}$ and $\epsilon_{l,k}$ is the relative fluctuations of the $l$-th player in his $k$-th round. We can see that the $l$-th player wins his $k$-th round iff $I_{lk}=1.$
The following theorem gives the conditional distribution of $I_{lk}$ in a 1-1 arena with uniform fluctuations.

\begin{thm}
For arbitrary $l\in \mathbb{N}$ and $\lambda\in\mathbb{R}$,
$$I_{l1}|X_l=\lambda,I_{l2}|X_l=\lambda,\cdots,I_{ln}|X_l=\lambda\ i.i.d\sim Bernoulli\left(\Phi\bigl(\frac{\lambda}{\sqrt{1+\rho^2}}\bigr)\right),$$
where $X_l$ is the strength of the $l$-th player and $I_{lk}=\mathbf{1}$\{The $l$-th player wins his $k$-th round\}.
\label{t cond dist}
\end{thm}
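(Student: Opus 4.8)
The plan is to split the claim into three parts: the common conditional law, conditional independence across rounds, and the explicit value of the Bernoulli parameter. The key observation is that each indicator $I_{lk}$ in (\ref{e defn Ilk}) is a measurable function of only four ingredients — the fixed strength $X_l$, the player's own fluctuation $\epsilon_{l,k}$, the opponent's strength $X_{\sigma_k(l)}$, and the opponent's fluctuation $\epsilon_{\sigma_k(l),k}$ — so the whole argument reduces to understanding how these ingredients overlap (or fail to overlap) as $k$ varies.

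First I would record the structural facts specific to a $1$-$1$ arena. Since $m=n=1$, every run of player $l$ is a single round played from the common state $(0,0)$ against an opponent $\sigma_k(l)$ supplied by the random matching of (A3'). By (A2') the strengths $(X_{l'})$ are i.i.d. and independent of all the relative fluctuations, and by Definition \ref{d matc infty} the matching assigns $\mathbb{P}(\sigma_k(l)=l')=0$ to each fixed $l'$; arguing exactly as in the step (\ref{e match condition}) of the proof of Theorem \ref{t proc arena}, the opponents $\sigma_1(l),\sigma_2(l),\dots$ are almost surely pairwise distinct. Consequently the triples $\bigl(\epsilon_{l,k},\,X_{\sigma_k(l)},\,\epsilon_{\sigma_k(l),k}\bigr)$ for $k=1,\dots,n$ involve disjoint fluctuation variables and disjoint opponents, hence form an independent family that is moreover independent of $X_l$.

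Given this, the two distributional claims are immediate. Conditioning on $X_l=\lambda$ fixes a quantity that appears in none of the opponent triples, so the $I_{lk}$, being functions of the mutually independent triples together with the constant $\lambda$, are conditionally independent given $X_l=\lambda$; they are identically distributed because each opponent's strength has the same law and the fluctuations are i.i.d. It remains to compute the parameter. From (\ref{e defn Ilk}),
\[
\mathbb{P}(I_{lk}=1\mid X_l=\lambda)=\mathbb{P}\!\left(\lambda-X_{\sigma_k(l)}+\tfrac{\rho}{\sqrt2}\bigl(\epsilon_{l,k}-\epsilon_{\sigma_k(l),k}\bigr)>0\right).
\]
Using the standard-normal original density $N(0,1)$ of this section, $\lambda-X_{\sigma_k(l)}\sim N(\lambda,1)$ and $\tfrac{\rho}{\sqrt2}(\epsilon_{l,k}-\epsilon_{\sigma_k(l),k})\sim N(0,\rho^2)$ are independent, so their sum is $N(\lambda,1+\rho^2)$ and the probability equals $\Phi\!\bigl(\lambda/\sqrt{1+\rho^2}\bigr)$; the role of the $1/\sqrt2$ normalization in (A2') is precisely to make the pooled fluctuation have variance $\rho^2$.

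I expect the only delicate point to be the almost-sure distinctness of opponents across the $n$ rounds, which is what licenses treating the opponent triples as an independent family; this is exactly the subtlety that the countably-infinite random matching of Definition \ref{d matc infty} is designed to absorb, and I would import the reasoning behind (\ref{e match condition}) rather than redo it. Everything after that reduction is the elementary Gaussian computation above.
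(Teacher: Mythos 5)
Your proposal is correct and follows essentially the same route as the paper: reduce each $I_{lk}$ to a function of $X_l$ and an independent Gaussian triple (opponent's strength and the two fluctuations), use the countably-infinite random matching to get almost-sure distinctness of opponents and hence independence across rounds, and finish with the variance computation $1+\tfrac{\rho^2}{2}+\tfrac{\rho^2}{2}=1+\rho^2$ giving $\Phi\bigl(\lambda/\sqrt{1+\rho^2}\bigr)$. You are in fact slightly more explicit than the paper about why the opponent triples form an independent family, but the substance is identical.
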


\begin{proof}
By Assumption (A2') we straightforwardly have 
\[\begin{split}
\mathbb{P}(I_{lk}=1|X_l=\lambda)&=\mathbb{P}\left(X_l+\frac{\rho}{\sqrt{2}}\epsilon_{l,k}>X_{\sigma_k(l)}+\frac{\rho}{\sqrt{2}}\epsilon_{\sigma_k(l),k}\middle|X_l=\lambda\right)\\
&=\mathbb{P}\left(X_{\sigma_k(l)}+\frac{\rho}{\sqrt{2}}\epsilon_{\sigma_k(l),k}-\frac{\rho}{\sqrt{2}}\epsilon_{l,k}<\lambda\middle|X_l=\lambda\right).
\end{split}\]
On the ground of Definition \ref{d matc infty},
$$X_l,X_{\sigma_k(l)},\epsilon_{\sigma_k(l),k},\epsilon_{l,k}\ i.i.d\sim N(0,1),$$
and therefore,
\begin{equation}
\mathbb{P}(I_{lk}=1|X_l=\lambda)=\mathbb{P}(X_{\sigma_k(l)}+\frac{\rho}{\sqrt{2}}\epsilon_{\sigma_k(l),k}-\frac{\rho}{\sqrt{2}}\epsilon_{l,k}<\lambda)=\Phi\bigl(\frac{\lambda}{\sqrt{1+\rho^2}}\bigr).
\label{e cond prob of win1}
\end{equation}
Moreover,
$$X_{\sigma_1(l)},\epsilon_{\sigma_1(l),1},\epsilon_{l,1},X_{\sigma_2(l)},\epsilon_{\sigma_2(l),2},\epsilon_{l,2},\cdots,X_{\sigma_n(l)},\epsilon_{\sigma_n(l),n},\epsilon_{l,n}\ i.i.d\sim N(0,1).$$
Combining it with (\ref{e cond prob of win1}) gives
$$I_{l1}|X_l=\lambda,I_{l2}|X=\lambda,\cdots,I_{ln}|X=\lambda\ i.i.d\sim Bernoulli\left(\Phi\bigl(\frac{\lambda}{\sqrt{1+\rho^2}}\bigr)\right).$$
\end{proof}
With the results of $m$ players in their $n$ rounds, we can write a matrix
\[I=\left(\begin{array}{cccc}
I_{11}&I_{12}&\cdots&I_{1n}\\
I_{21}&I_{22}&\cdots&I_{2n}\\
\vdots&\vdots&\ddots&\vdots\\
I_{m1}&I_{m2}&\cdots&I_{mn}\\
\end{array}\right),\]
where $I_{lk}=\mathbf{1}$\{The $l$-th player wins his $k$-th round\}.
According to Theorem \ref{t cond dist}, we have
\begin{equation}
\mathbb{E}\bigl(\frac{1}{n}\sum_{k=1}^{n}I_{lk}\bigr)=\mathbb{E}\Phi\bigl(\frac{X_l}{\sqrt{1+\rho^2}}\bigr),
\end{equation}
\begin{equation}
\mathbb{E}\bigl(\frac{1}{n}\sum_{k=1}^{n}I_{lk}\bigr)^2=\frac{1}{n}\mathbb{E}\Phi\bigl(\frac{X_l}{\sqrt{1+\rho^2}}\bigr)+\bigl(1-\frac{1}{n}\bigr)\mathbb{E}\Phi^2\bigl(\frac{X_l}{\sqrt{1+\rho^2}}\bigr),
\end{equation}
where $X_l\sim N(0,1)$. To approach an estimation of $\rho$, we first calculate the mean value and the second moment of $\Phi(\frac{X_l}{\sqrt{1+\rho^2}})$.

\begin{defn}
Suppose $X \sim N(0,1)$ and $\Phi(\cdot)$ is the cumulative distribution function of standard normal distribution. Then we call $\xi=\Phi(\frac{X}{\sqrt{1+\rho^2}})$ a win rate random variable with parameter $\rho\geqslant0$, and simply write $\xi \sim Wr(\rho)$.
\end{defn}

\begin{prop}
Suppose $\xi \sim Wr(\rho),\Phi(\cdot)\ and\ p(\cdot)$ are the cumulative distribution function and probability density function of the standard normal distribution respectively. Then,\\
(1) the mean value of $\xi$
\begin{equation}
\mathbb{E}\xi =\frac{1}{2},
\end{equation}
(2) the second moment of $\xi$ 
\begin{equation}
\mathbb{E} \xi^2=\frac{1}{2}-\frac{1}{\pi}arctan\sqrt{\frac{1+\rho^2}{3+\rho^2}}.
\end{equation}
\label{thm moment}
\end{prop}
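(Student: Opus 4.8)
The plan is to treat the two moments separately, exploiting the representation of $\Phi$ as a Gaussian tail probability so that moments of $\xi$ turn into orthant probabilities of jointly normal vectors. Throughout write $Y=X/\sqrt{1+\rho^2}$, which is centered normal with variance $1/(1+\rho^2)$, and let $Z,Z_1,Z_2\sim N(0,1)$ be independent of each other and of $X$.

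For (1) I would use symmetry. Since $Y\overset{d}{=}-Y$ and $\Phi(-t)=1-\Phi(t)$, we get $\mathbb{E}\,\Phi(Y)=\mathbb{E}\,\Phi(-Y)=1-\mathbb{E}\,\Phi(Y)$, forcing $\mathbb{E}\xi=\frac12$. Equivalently, $\mathbb{E}\xi=\mathbb{P}(Z\le Y)=\mathbb{P}(Z-Y\le 0)=\frac12$ because $Z-Y$ is centered normal.

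For (2) I would apply the same device with two copies. Writing $\xi=\mathbb{P}(Z_1\le Y\mid X)$, conditioning on $X$ and taking expectations gives
\[
\mathbb{E}\xi^2=\mathbb{P}\Bigl(Z_1\le Y,\ Z_2\le Y\Bigr)=\mathbb{P}(U_1\le 0,\ U_2\le 0),\qquad U_i:=Z_i-Y .
\]
The pair $(U_1,U_2)$ is centered bivariate normal, and a short variance/covariance computation gives $\mathrm{Var}(U_i)=1+\frac{1}{1+\rho^2}=\frac{2+\rho^2}{1+\rho^2}$ and $\mathrm{Cov}(U_1,U_2)=\mathrm{Var}(Y)=\frac{1}{1+\rho^2}$, so the correlation is $r=\frac{1}{2+\rho^2}$.

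The remaining work is the orthant-probability evaluation, which I regard as the main obstacle; the rest is symmetry or routine algebra. I would invoke the standard bivariate-normal formula $\mathbb{P}(U_1\le0,U_2\le0)=\frac14+\frac{1}{2\pi}\arcsin r$. For a self-contained derivation, write $g(r)$ for this orthant probability as an integral of the standardized density $\phi_2(\cdot,\cdot;r)$ over the negative quadrant, use the heat-type identity $\partial_r\phi_2=\partial^2_{u_1u_2}\phi_2$ to obtain $g'(r)=\phi_2(0,0;r)=\frac{1}{2\pi\sqrt{1-r^2}}$, and integrate from $g(0)=\frac14$. Finally I would reconcile the $\arcsin$ form with the claimed $\arctan$ form: setting $\theta=\arctan\sqrt{\frac{1+\rho^2}{3+\rho^2}}$ and using $\cos 2\theta=\frac{1-\tan^2\theta}{1+\tan^2\theta}$ gives $\cos 2\theta=\frac{1}{2+\rho^2}=r$, whence $\arcsin r=\frac\pi2-2\theta$ and
\[
\mathbb{E}\xi^2=\frac14+\frac{1}{2\pi}\Bigl(\frac\pi2-2\theta\Bigr)=\frac12-\frac1\pi\arctan\sqrt{\frac{1+\rho^2}{3+\rho^2}} .
\]
As a sanity check I would verify $\rho=0$, where both the $\arcsin$ and $\arctan$ expressions return $\frac13$.
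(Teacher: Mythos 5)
Your proof is correct, and it takes a genuinely different route from the paper's. The paper works analytically: it sets $f(a)=\int p(x)\Phi^2(x/a)\,\dif x$, rewrites it via integration by parts as $1-2\int p(x)\Phi(x)\Phi(ax)\,\dif x$, differentiates under the integral sign in the scale parameter $a$ to get $\tfrac{1}{2\pi}(1+a^2)^{-1}(2+a^2)^{-1/2}$, and integrates back from $f(0)$ with a trigonometric substitution, finally plugging in $a=\sqrt{1+\rho^2}$. You instead recognize $\mathbb{E}\xi^2$ as the orthant probability $\mathbb{P}(Z_1\le Y,\ Z_2\le Y)$ for a centered bivariate normal pair with correlation $r=\tfrac{1}{2+\rho^2}$ and invoke Sheppard's formula $\tfrac14+\tfrac{1}{2\pi}\arcsin r$; your covariance computation, the $\arcsin$-to-$\arctan$ reconciliation via $\cos2\theta=r$, and the $\rho=0$ check (where $\xi$ is uniform, so $\mathbb{E}\xi^2=\tfrac13$) are all right. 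The two arguments are at bottom cousins — your self-contained derivation of Sheppard's formula via $\partial_r\phi_2=\partial^2_{u_1u_2}\phi_2$ is again a differentiate-in-a-parameter-and-integrate-back argument, structurally parallel to the paper's (\ref{e diff int}) — but your framing is more conceptual: it explains where the inverse-trigonometric function comes from, reduces the computation to a classical result, and generalizes immediately to higher moments ($\mathbb{E}\xi^k$ as a $k$-dimensional orthant probability), at the cost of importing the bivariate-normal identity that the paper's elementary calculus avoids. Part (1) is the same symmetry argument in both.
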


\begin{proof}
(1) An immediate computation shows that
\begin{equation}
\begin{split}
\mathbb{E}\xi&=\int^{+\infty}_{-\infty}p(x)\Phi\bigl({\frac{x}{\sqrt{1+\rho^2}}}\bigr)\dif x\\
&=\int^{+\infty}_{0}p(x)\left(\Phi\bigl(\frac{x}{\sqrt{1+\rho^2}}\bigr)+\Phi\bigl(\frac{-x}{\sqrt{1+\rho^2}}\bigr)\right)\dif x\\
&=\int^{+\infty}_{0}p(x)\dif x =\frac{1}{2}.
\end{split}
\end{equation}
(2) Define
\begin{equation}
\begin{split}
f(a)&=\int_{-\infty}^{+\infty}p(x)\Phi^2\bigl(\frac{x}{a}\bigr)\dif x\\
&=a\int_{-\infty}^{+\infty}p(ax)\Phi^2(x)\dif x =\int_{-\infty}^{+\infty}\Phi^2(x)\dif \Phi(ax)\\
&=1-2\int_{-\infty}^{+\infty}p(x)\Phi(x)\Phi(ax)\dif x.
\end{split}
\label{e func g def}
\end{equation}
Since $g(a,x)=p(x)\Phi(x)\Phi(ax) \geqslant 0 $ is continuously differentiable on $[0,+\infty)\times \mathbb{R}$ and 
\begin{equation}
\int_{-\infty}^{+\infty}g(a,x)\dif x\leqslant \int _{-\infty}^{+\infty}p(x)\dif x=1< \infty,
\end{equation}
\begin{equation}
\int_{-\infty}^{+\infty}\left|\frac{\partial}{\partial a}g(a,x)\right|\dif x\leqslant\frac{1}{\pi(1+a^2)}\int_{0}^{+\infty}e^{-t}\dif t=\frac{1}{\pi(1+a^2)}<\infty,
\end{equation}
the integral $\int_{0}^{+\infty}\frac{\partial}{\partial a}g(a,x)\dif x$ is uniformly convergent for $a\in [0,+\infty)$. Hence,
\begin{equation}
\begin{split}
\frac{\dif}{\dif a}\int_{-\infty}^{+\infty}p(x)\Phi(x)\Phi(ax)\dif x&=\int_{-\infty}^{+\infty}\frac{\partial}{\partial a}g(a,x)\dif x\\
&=-\frac{1}{2\pi(1+a^2)}\int_{-\infty}^{+\infty}\Phi(x)\dif e^{-\frac{1+a^2}{2}x^2}\\
&=\frac{1}{2\pi(1+a^2)}\frac{1}{\sqrt{2\pi}}\int_{-\infty}^{+\infty}e^{-\frac{2+a^2}{2}x^2}\dif x\\
&=\frac{1}{2\pi}\frac{1}{(1+a^2)\sqrt{2+a^2}}.
\end{split}
\label{e diff int}
\end{equation}
Also,
\begin{equation}
\int_{-\infty}^{+\infty}p(x)\Phi(x)\Phi(0)\dif x=\frac{1}{2}\int_{-\infty}^{+\infty}\Phi(x)\dif \Phi(x)=\frac{1}{4}.
\label{e int init val}
\end{equation}
By (\ref{e func g def}), (\ref{e diff int}) and (\ref{e int init val}), we have
\begin{equation}
\begin{split}
f(a)&=\frac{1}{2}-\frac{1}{\pi}\int_{0}^{a}\frac{1}{(1+u^2)\sqrt{2+u^2}}\dif u\\
&=\frac{1}{2}-\frac{1}{\pi}\int_{0}^{arctan\frac{a}{\sqrt{2}}}\frac{\sqrt{2}sec^2\theta \dif \theta}{\sqrt{2}(1+2tan^2\theta)sec\theta}\\
&=\frac{1}{2}-\frac{1}{\pi}\int_{0}^{arctan\frac{a}{\sqrt{2}}}\frac{\dif sin\theta}{1+sin^2\theta}\\
&=\frac{1}{2}-\frac{1}{\pi}arctan\frac{a}{\sqrt{2+a^2}}.
\end{split}
\end{equation}
Therefore,
\begin{equation}
\mathbb{E}\xi^2=\int_{-\infty}^{+\infty}p(x)\Phi^2\bigl(\frac{x}{\sqrt{1+\rho^2}}\bigr)\dif x=f(\sqrt{1+\rho^2})=\frac{1}{2}-\frac{1}{\pi}arctan\sqrt{\frac{1+\rho^2}{3+\rho^2}}.
\end{equation}
\end{proof}

With the help of the second moment of a win rate random variable, we can directly find a moment estimation of the coefficient of fluctuations.

\begin{thm}
Assume in a 1-1 arena with uniform fluctuations, $m$ players are sampled randomly and their results $I_{lk}=\mathbf{1}$\{The $l$-th player wins his $k$-th round\} form an $m\times n$ sample matrix
\[I=\left(\begin{array}{cccc}
I_{11}&I_{12}&\cdots&I_{1n}\\
I_{21}&I_{22}&\cdots&I_{2n}\\
\vdots&\vdots&\ddots&\vdots\\
I_{m1}&I_{m2}&\cdots&I_{mn}\\
\end{array}\right).\]
Then, the estimator 
\begin{equation}
\hat{\rho}=\sqrt{\frac{3-tan^2\pi T}{tan^2\pi T-1}}
\label{e esti of infty}
\end{equation}
is a strongly consistent estimation of the coefficient of fluctuations $\rho$, where
\begin{equation}
T=\frac{1}{n-1}\left(\frac{1}{mn}\sum_{l=1}^{m}Y_{l}^2-\frac{1}{2}\right),\qquad Y_l=\sum_{k=1}^{n}I_{lk},\qquad l=1,2,\cdots,m.
\end{equation}
\label{t stat1}
\end{thm}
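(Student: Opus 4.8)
The plan is to reduce the claim to the strong law of large numbers followed by a continuous mapping argument. Write $Y_l=\sum_{k=1}^{n}I_{lk}$ for the number of wins of the $l$-th sampled player and set $\xi_l=\Phi\bigl(X_l/\sqrt{1+\rho^2}\bigr)$. Since the sampled players have i.i.d.\ strengths and, in the infinite arena, the opponents encountered by the finitely many sampled players are with probability one all distinct and distinct from the sampled players themselves (Definition \ref{d matc infty}), the vectors $(I_{l1},\dots,I_{ln})$ are independent across $l$, and hence the $Y_l$ are i.i.d. By Theorem \ref{t cond dist}, conditionally on $X_l$ we have $Y_l\sim\mathrm{Binomial}(n,\xi_l)$, so everything is driven by the first two moments of $\xi_l$, which are supplied by Proposition \ref{thm moment}.

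First I would compute the second moment of $Y_l$ by conditioning on $X_l$:
\begin{equation*}
\mathbb{E}\bigl[Y_l^2\mid X_l\bigr]=n\xi_l(1-\xi_l)+n^2\xi_l^2=n\xi_l+n(n-1)\xi_l^2 .
\end{equation*}
Taking expectations and inserting $\mathbb{E}\xi_l=\tfrac12$ and $\mathbb{E}\xi_l^2=\tfrac12-\tfrac1\pi\arctan\sqrt{(1+\rho^2)/(3+\rho^2)}$ from Proposition \ref{thm moment} gives $\mathbb{E}[Y_l^2]=\tfrac{n}{2}+n(n-1)\mathbb{E}\xi_l^2$. Because $Y_l^2\leqslant n^2$ is bounded, the sequence $(Y_l^2)$ is integrable i.i.d., so the strong law of large numbers yields
\begin{equation*}
\frac{1}{mn}\sum_{l=1}^{m}Y_l^2\xrightarrow{\text{a.s.}}\frac1n\,\mathbb{E}[Y_1^2]=\frac12+(n-1)\,\mathbb{E}\xi_1^2\qquad(m\to\infty),
\end{equation*}
whence $T\xrightarrow{\text{a.s.}}\mathbb{E}\xi_1^2=\tfrac12-\tfrac1\pi\arctan\sqrt{(1+\rho^2)/(3+\rho^2)}$.

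It then remains to invert this relation. Setting the limit equal to $h(\rho):=\tfrac12-\tfrac1\pi\arctan\sqrt{(1+\rho^2)/(3+\rho^2)}$ and solving, I would use $\tan\bigl(\tfrac\pi2-\pi T\bigr)=\cot(\pi T)$ to obtain $\tan^2(\pi T)=(3+\rho^2)/(1+\rho^2)$, which rearranges exactly to $\rho^2=(3-\tan^2\pi T)/(\tan^2\pi T-1)$, i.e.\ the stated $\hat\rho$. Because $h$ is a strictly decreasing continuous bijection from $[0,\infty)$ onto $(1/4,1/3]$ (the endpoints coming from $\arctan(1/\sqrt3)=\pi/6$ and $\arctan1=\pi/4$), the map $\hat\rho=h^{-1}(T)$ is continuous in a neighbourhood of $h(\rho)$; the continuous mapping theorem for almost-sure convergence then gives $\hat\rho\xrightarrow{\text{a.s.}}h^{-1}(h(\rho))=\rho$, which is the asserted strong consistency.

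The routine pieces are the second-moment computation and the trigonometric inversion. The step needing the most care is the very first one: justifying that the $Y_l$ are genuinely i.i.d.\ so that the strong law applies. This is exactly where the ``infinite case'' enters, since the random matching of Definition \ref{d matc infty} assigns probability zero to any two fixed players meeting or sharing a common opponent, which decouples their win records; in a finite arena this independence would fail and the argument would require modification. A secondary point worth verifying is that $T$ eventually lands in $(1/4,1/3]$, so that $\hat\rho$ is real and the inverse map is continuous at the limit; this follows from the almost-sure convergence of $T$ to $h(\rho)$.
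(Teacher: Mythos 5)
Your proposal is correct and follows essentially the same route as the paper: establish independence of the rows via the infinite random matching, use the conditional binomial distribution from Theorem \ref{t cond dist} together with the moments in Proposition \ref{thm moment} to get $\mathbb{E}Y_1^2$, apply the strong law of large numbers, and invert. You are somewhat more explicit than the paper about the trigonometric inversion and the continuity of $h^{-1}$ on $(1/4,1/3]$, which the paper asserts without detail, but the substance is identical.
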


\begin{proof}
First, we prove $I_{l_1k_1}$ and $I_{l_2k_2}$ are independent for any $l_1\neq l_2$. By Assumption (A3') and Definition \ref{d matc infty}, we have 
$$X_{l_1},\epsilon_{l_1,k_1},X_{\sigma_{k_1}(l_1)},\epsilon_{\sigma_{k_1}(l_1),k_1},X_{l_2},\epsilon_{l_2,k_2},X_{\sigma_{k_2}(l_2)},\epsilon_{\sigma_{k_2}(l_2),k_2}$$ are mutually independent. Also,
$$I_{l_1k_1}=\mathbf{1}\{X_{l_1}+\frac{\rho}{\sqrt{2}}\epsilon_{l_1,k_1}>X_{\sigma_{k_1}(l_1)}+\frac{\rho}{\sqrt{2}}\epsilon_{\sigma_{k_1}(l_1),k_1}\},$$
$$I_{l_2k_2}=\mathbf{1}\{X_{l_2}+\frac{\rho}{\sqrt{2}}\epsilon_{l_2,k_2}>X_{\sigma_{k_2}(l_2)}+\frac{\rho}{\sqrt{2}}\epsilon_{\sigma_{k_2}(l_2),k_2}\},$$
we have $I_{l_1k_1},I_{l_2k_2}(l_1\neq l_2)$ are independent. Therefore,
$$(I_{11},I_{12},\cdots,I_{1n}),(I_{21},I_{22},\cdots,I_{2n}),\cdots,(I_{m1},I_{m2},\cdots,I_{mn})$$ are mutually independent. Let $Y_l=\sum\limits_{k=1}^{n}I_{lk}(l=1,2,\cdots,m)$, then $Y_1,Y_2,\cdots,Y_m$ are independent. According to Theorem \ref{t cond dist}, we have
$$Y_l|X_l=\lambda\sim B\left(n,\Phi\bigl(\frac{\lambda}{\sqrt{1+\rho^2}}\bigr)\right).$$
Applying Proposition \ref{thm moment} yields
\begin{equation}
\begin{split}
\mathbb{E}(Y_l^2)&=n\mathbb{E}\Phi\bigl(\frac{X_l}{\sqrt{1+\rho^2}}\bigr)+(n^2-n)\mathbb{E}\Phi^2\bigl(\frac{X_l}{\sqrt{1+\rho^2}}\bigr)\\
&=\frac{n^2}{2}-\frac{n^2-n}{\pi}arctan\sqrt{\frac{1+\rho^2}{3+\rho^2}}.
\end{split}
\label{e expe}
\end{equation}
Since $Y_1,Y_2,\cdots,Y_m$ are mutually independent and identically distributed, applying strong law of large numbers gives
\begin{equation}
\frac{Y_1^2+Y_2^2+\cdots+Y_m^2}{m}\longrightarrow \frac{n^2}{2}-\frac{n^2-n}{\pi}arctan\sqrt{\frac{1+\rho^2}{3+\rho^2}}\quad(m\rightarrow \infty)\quad a.s.
\end{equation}
\begin{equation}
\hat{\rho}=\sqrt{\frac{3-tan^2\pi T}{tan^2\pi T-1}}\longrightarrow \rho \quad (m\rightarrow \infty)\quad a.s.
\end{equation}
In conclusion, $\hat{\rho}$ is a strongly consistent estimator of $\rho$.
\end{proof}

\section{Arena model with fluctuations in the finite case}

Part of the Assumption (A1) is impractical since the population of players is a countably infinite set since the number of players in a game is always finite. So we update the Assumption (A1) into

(A1') In an arena, an infinite number of \emph{runs} can be held among a fixed group of individuals, and these individuals are called \emph{players}. All players constitute a finite set $A_{0,0}^{q}=\{a_1,a_2,\cdots,a_M\}$, where $a_l$ is the $l$-th player, $M$ is even and $q=1,2,\cdots$.

A question is naturally posed that does the above estimator still perform well under the new assumption? The answer is not straightforward for two reasons. First, the normality assumption in Assumption (A2') is almost meaningless and the statement
$$I_{l1}|X_l=\lambda,I_{l2}|X_l=\lambda,\cdots,I_{ln}|X_l=\lambda\ i.i.d\sim Bernoulli\left(\Phi\bigl(\frac{\lambda}{\sqrt{1+\rho^2}}\bigr)\right)$$
no longer holds because in this case, the strengths of a player's opponents are some constants instead of a continuous random variable. Now we reconsider the value of $\mathbb{P}(I_{11}=I_{12}=1|X_1=\lambda)$ when the total number of players is an even number $M<\infty$. Let us compute it with an extreme example. Suppose only two players A and B participate in a 1-1 arena and $\rho$ is zero. The strength of player A's opponent will be a constant during $n$ rounds although it normally distributes in the first round. Therefore,
\[
\begin{split}
\Phi(\lambda)&=\mathbb{P}(I_{11}=I_{12}=1|X_1=\lambda)=\mathbb{P}(X_2<X_1|X_1=\lambda)\\
&\neq\mathbb{P}(I_{11}=1|X_1=\lambda)\mathbb{P}(I_{12}=1|X_2=\lambda)=\bigl(\Phi(\lambda)\bigr)^2.
\end{split}
\]
In addition, the total number of every player's wins
$$Y_l=\sum_{k=1}^{n}I_{lk},\qquad l=1,2,\cdots,m,$$
are not independent obviously and there is an identity that
\begin{equation}
\sum_{l=1}^{M}Y_l=\frac{Mn}{2}.
\end{equation}

Thanks to our definition of random matching on a countably infinite set, we guess naturally from Theorem \ref{t cond dist} that
\begin{equation}
\lim\limits_{M\rightarrow\infty}\mathbb{P}(I_{11}=1,I_{12}=1|X_1=\lambda)=\lim\limits_{M\rightarrow\infty}\bigl[\mathbb{P}(I_{11}=1|X_1=\lambda)\mathbb{P}(I_{12}=1|X_1=\lambda)\bigr].
\end{equation}
In other words,
\begin{equation}
\mathbb{P}(I_{11}=1,I_{12}=1|X_1=\lambda)=\mathbb{P}(I_{11}=1|X_1=\lambda)\mathbb{P}(I_{12}=1|X_1=\lambda)+o(1),\ \mathrm{as}\ M\rightarrow\infty.
\end{equation}
In order to prove an estimator similar to (\ref{e esti of infty}) is consistent, $\mathbb{E}Y_l^2$ and $Var(\sum_{l=1}^{M}Y_l^2)$ with respect to $M$ are required. Taking $M$ into consideration, we assume $I_{lk}^{(M)}$ and $Y_{l}^{(M)}=\sum_{k=1}^{n}I_{lk}^{(M)}$ to be the result of the $l$-th player in his $k$-th round and the total number of wins of the $l$-th player repectively when the population of the players is a finite set $A_M=\{a_1,a_2,\cdots,a_M\}$ and $I_{lk}$ and $Y_{l}=\sum_{k=1}^{n}I_{lk}$ the ones as we notate in Section 5.

\begin{lemma}
As defined before,
$$Y_l^{(M)}\overset{\text{d}}{\longrightarrow}Y_l,\ \mathrm{as}\ M\rightarrow\infty.$$
\label{t dist conv}
\end{lemma}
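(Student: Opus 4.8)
The plan is to reduce everything to a conditional Binomial structure and then apply the strong law of large numbers together with bounded convergence. Fix the player index $l$ and work in the finite arena with $M$ players whose strengths $X_1,\dots,X_M$ are i.i.d.\ $N(0,1)$; as $M$ grows I regard these as the first $M$ terms of one fixed infinite i.i.d.\ sequence, so that limits in $M$ live on a single probability space. First I would condition on the entire strength vector $(X_1,\dots,X_M)$. In the $1$-$1$ arena each run consists of a single round, the matchings in distinct runs are independent, and by Definition~\ref{d matc fty} the opponent $\sigma_k(l)$ is uniform over the $M-1$ players other than $l$, while the fluctuations $\epsilon$ are independent across everything. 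Hence, conditionally on the strengths, the rounds $I_{l1}^{(M)},\dots,I_{ln}^{(M)}$ are independent and identically distributed, each $\mathrm{Bernoulli}(h_M)$ with
\[
h_M=\frac{1}{M-1}\sum_{j\neq l}\Phi\Bigl(\frac{X_l-X_j}{\rho}\Bigr),
\]
since against a fixed opponent of strength $x_j$ the win probability equals $\Phi\bigl((X_l-x_j)/\rho\bigr)$ (the difference of the two independent fluctuation terms has standard deviation $\rho$). Consequently $Y_l^{(M)}\mid(X_1,\dots,X_M)\sim B(n,h_M)$.

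Next, condition only on $X_l=\lambda$. The summands $\Phi\bigl((\lambda-X_j)/\rho\bigr)$, $j\neq l$, are i.i.d.\ and bounded, so the strong law of large numbers gives $h_M\to \mathbb{E}\,\Phi\bigl((\lambda-X)/\rho\bigr)$ almost surely as $M\to\infty$, where $X\sim N(0,1)$. A direct computation (already implicit in Theorem~\ref{t cond dist}) shows this limit equals $\Phi\bigl(\lambda/\sqrt{1+\rho^2}\bigr)=:\phi(\lambda)$, because $X+\rho Z\sim N(0,1+\rho^2)$. Thus the random success probability $h_M$ collapses to the deterministic value $\phi(\lambda)$ that governs the infinite case.

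I would then transfer this to the laws themselves by bounded convergence, used twice. For each $s\in\{0,1,\dots,n\}$,
\[
\mathbb{P}\bigl(Y_l^{(M)}=s\mid X_l=\lambda\bigr)=\mathbb{E}\Bigl[\binom{n}{s}h_M^{\,s}(1-h_M)^{\,n-s}\Bigr]\longrightarrow \binom{n}{s}\phi(\lambda)^{s}\bigl(1-\phi(\lambda)\bigr)^{n-s},
\]
since $h\mapsto \binom{n}{s}h^s(1-h)^{n-s}$ is continuous and bounded on $[0,1]$ and $h_M\to\phi(\lambda)$ a.s. Integrating against the $N(0,1)$ density of $X_l$ and again invoking bounded convergence yields $\mathbb{P}(Y_l^{(M)}=s)\to\int \binom{n}{s}\phi(\lambda)^s\bigl(1-\phi(\lambda)\bigr)^{n-s}p(\lambda)\,\dif\lambda=\mathbb{P}(Y_l=s)$, where the right-hand side is exactly the law of $Y_l$ obtained by mixing $B\bigl(n,\phi(\lambda)\bigr)$ over $\lambda\sim N(0,1)$ via Theorem~\ref{t cond dist}. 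Because $Y_l^{(M)}$ and $Y_l$ are supported on the finite set $\{0,\dots,n\}$, pointwise convergence of the probability mass functions is equivalent to convergence in distribution, which is the claim. (Equivalently, one could pass to the limit in the probability generating function $\mathbb{E}\bigl[(1-h_M+h_Mz)^n\bigr]$, which is bounded for $z$ in the unit disk.)

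The step I expect to be the main obstacle is the conditional i.i.d.\ reduction in the first paragraph: one must argue carefully that, once the strengths are fixed, the only remaining randomness in round $k$ is the independent choice of opponent $\sigma_k(l)$ and the independent round-$k$ fluctuations, so that the rounds genuinely decouple into i.i.d.\ $\mathrm{Bernoulli}(h_M)$ variables. This is precisely where the finite model differs from the infinite one: in the infinite case the rounds are already conditionally independent given $X_l=\lambda$ with the deterministic success probability $\phi(\lambda)$ (Theorem~\ref{t cond dist}), whereas here the shared finite pool of opponents makes the success probability the random quantity $h_M$ and only renders the rounds independent after conditioning on all strengths; the two pictures coincide only in the limit, through the law of large numbers. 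Everything after that reduction is routine bounded-convergence bookkeeping.
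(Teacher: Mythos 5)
Your proof is correct, but it takes a genuinely different route from the paper's. The paper conditions on the combinatorial event $\bar{\mathcal{A}}$ that the $n$ opponents of player $l$ are pairwise distinct, notes that $\mathbb{P}(\mathcal{A})\leqslant \frac{n(n-1)}{2(M-1)}\to 0$, observes that on $\bar{\mathcal{A}}$ the opponents' performances are i.i.d.\ $N(0,1+\rho^2)$ and independent of $X_l$, and then squeezes the joint probability $\mathbb{P}(I_{l1}^{(M)}=\cdots=I_{ln}^{(M)}=1\mid X_l=\lambda)$ between $\bigl(1-O(1/M)\bigr)\Phi^n$ and $\Phi^n+O(1/M)$. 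You instead condition on the full strength vector to obtain the exact finite-$M$ representation $Y_l^{(M)}\mid(X_1,\dots,X_M)\sim B(n,h_M)$ with the random success probability $h_M=\frac{1}{M-1}\sum_{j\neq l}\Phi\bigl((X_l-X_j)/\rho\bigr)$, and then let the strong law collapse $h_M$ to $\Phi\bigl(\lambda/\sqrt{1+\rho^2}\bigr)$, finishing with two applications of bounded convergence. Both arguments rest on the same modelling inputs (independence of the matchings $\sigma_1(l),\dots,\sigma_n(l)$ across rounds and of the fluctuations across rounds and players, plus $\rho>0$ so that $\Phi\bigl((X_l-X_j)/\rho\bigr)$ is well defined); your conditional-i.i.d.\ decoupling is exactly as legitimate as the paper's assertion that the $\sigma_i(l)$ are independent. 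What each buys: the paper's squeeze produces an explicit $O(n^2/M)$ error bound on the joint probabilities, which is the form reused in the Appendix to control the covariance terms in Theorem 5.2; your representation is conceptually cleaner, identifies the exact finite-$M$ law as a Binomial mixture, and would yield rates or a CLT for $h_M$ with little extra work, though it gives the $o(1)$ less explicitly. One cosmetic point: for the almost-sure convergence of $h_M$ you only need the strengths (not the matchings) to be nested across $M$, and since $h_M$ is a function of the strengths alone, your single-probability-space embedding is indeed sufficient.
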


\begin{proof}
For a given even number $M$ and $1\leqslant l\leqslant M$, construct an event
\begin{equation}
\mathcal{A}=\bigcup_{i=1}^{n-1}\bigcup_{j=i+1}^{n}\{\sigma_i(l)=\sigma_j(l)\}.
\end{equation}
It is easy to see that $\mathcal{A}$ does not happen iff no player competes with the $l$-th player more than once in $n$ rounds. On one hand, since $\mathcal{A}$ and $X_l$ are independent, for arbitrary $\lambda\in \mathbb{R}$ we have 
\begin{equation}
\begin{split}
&\mathbb{P}\left(I_{l1}^{(M)}=1,I_{l2}^{(M)}=1,\cdots,I_{ln}^{(M)}=1\middle|X_l=\lambda\right)\\
\geqslant &\mathbb{P}\left(I_{l1}^{(M)}=1,I_{l2}^{(M)}=1,\cdots,I_{ln}^{(M)}=1,\bar{\mathcal{A}}\middle|X_l=\lambda\right)\\
=&\mathbb{P}\left(I_{l1}^{(M)}=1,I_{l2}^{(M)}=1,\cdots,I_{ln}^{(M)}=1\middle|\bar{\mathcal{A}},X_l=\lambda\right)\mathbb{P}\left(\bar{\mathcal{A}}\right).
\end{split}
\label{e cond prob geq1}
\end{equation}
By Assumption (A3') and Definition \ref{d matc fty}, $\sigma_i(l)$ and $\sigma_j(l)$ are independent and 
$$\mathbb{P}\bigl(\sigma_i(l)=l'\bigr)=\frac{1}{M-1},\qquad \forall l'\in A_M\backslash\{a_l\}.$$
Therefore,
\begin{equation}
\mathbb{P}(\mathcal{A})\leqslant\sum\limits_{i=1}^{n-1}\sum\limits_{j=i+1}^{n}\mathbb{P}\bigl(\sigma_i(l)=\sigma_j(l)\bigr)=\frac{n(n-1)}{2}\frac{1}{M-1}.
\end{equation}
Provided that M is large enough,
\begin{equation}
\mathbb{P}(\bar{\mathcal{A}})=1-\mathbb{P}(\mathcal{A})\geqslant 1-\frac{n(n-1)}{2}\frac{1}{M-1}>0.
\label{e prob of event geq}
\end{equation}
According to Assumption (A2'), we have
\begin{equation}
\begin{split}
&\mathbb{P}\left(I_{l1}^{(M)}=1,I_{l2}^{(M)}=1,\cdots,I_{ln}^{(M)}=1\middle|\bar{\mathcal{A}},X_l=\lambda\right)\\
=&\mathbb{P}\left(X_{\sigma_1(l)}+\frac{\rho}{\sqrt{2}}\epsilon_{\sigma_1(l),1}-\frac{\rho}{\sqrt{2}}\epsilon_{l,1}<\lambda,\cdots,X_{\sigma_n(l)}+\frac{\rho}{\sqrt{2}}\epsilon_{\sigma_n(l),n}-\frac{\rho}{\sqrt{2}}\epsilon_{l,n}<\lambda\middle|\bar{\mathcal{A}},X_l=\lambda\right),
\end{split}
\label{e cond prob meaning}
\end{equation}
and in the condition that $\forall i\neq j,\sigma_i(l)\neq\sigma_j(l)$, $X_l$,$X_{\sigma_i(l)}$ and $X_{\sigma_j(l)}$ are mutually independent. As a result,
$$X_{\sigma_1(l)}+\frac{\rho}{\sqrt{2}}\epsilon_{\sigma_1(l),1}-\frac{\rho}{\sqrt{2}}\epsilon_{l,1},\cdots,X_{\sigma_n(l)}+\frac{\rho}{\sqrt{2}}\epsilon_{\sigma_n(l),n}-\frac{\rho}{\sqrt{2}}\epsilon_{l,n}\ i.i.d\sim N(0,\sqrt{1+\rho^2}),$$
which are independent of $X_l$, given $\bar{\mathcal{A}}$. Combining it with (\ref{e defn Ilk}), (\ref{e cond prob geq1}), (\ref{e prob of event geq}) and (\ref{e cond prob meaning}) yields
\begin{equation}
\mathbb{P}\left(I_{l1}^{(M)}=1,I_{l2}^{(M)}=1,\cdots,I_{ln}^{(M)}=1\middle|X_l=\lambda\right)\geqslant \bigl(1-\frac{n(n-1)}{2}\frac{1}{M-1}\bigr)\Bigl[\Phi\bigl(\frac{\lambda}{\sqrt{1+\rho^2}}\bigr)\Bigr]^n.
\label{e cond prob geq2}
\end{equation}
On the other hand,
\begin{equation}
\begin{split}
&\mathbb{P}\left(I_{l1}^{(M)}=1,I_{l2}^{(M)}=1,\cdots,I_{ln}^{(M)}=1\middle|X_l=\lambda\right)\\
\leqslant &\mathbb{P}\left(I_{l1}^{(M)}=1,I_{l2}^{(M)}=1,\cdots,I_{ln}^{(M)}=1\middle|\bar{\mathcal{A}},X_l=\lambda\right)\mathbb{P}(\bar{\mathcal{A}})+\mathbb{P}(\mathcal{A})\\
\leqslant &\Bigl[\Phi\bigl(\frac{\lambda}{\sqrt{1+\rho^2}}\bigr)\Bigr]^n+\frac{n(n-1)}{2}\frac{1}{M-1}.
\end{split}
\label{e cond prob leq1}
\end{equation}
Applying the Squeezing Theorem to (\ref{e cond prob geq2}) and (\ref{e cond prob leq1}) yields
\begin{equation}
\begin{split}
&\lim\limits_{M\rightarrow\infty}\mathbb{P}\left(I_{l1}^{(M)}=1,I_{l2}^{(M)}=1,\cdots,I_{ln}^{(M)}=1\middle|X_l=\lambda\right)\\
=&\mathbb{P}\left(I_{l1}=1,I_{l2}=1,\cdots,I_{ln}=1\middle|X_l=\lambda\right)=\Bigl[\Phi\bigl(\frac{\lambda}{\sqrt{1+\rho^2}}\bigr)\Bigr]^n.
\end{split}
\end{equation}
If $I_{l1},I_{l2},\cdots,I_{ln}$ take other values, it suffices to flip the inequality sign in (\ref{e cond prob meaning}). Hence,
\begin{equation}
\left(I_{l1}^{(M)},I_{l2}^{(M)},\cdots,I_{ln}^{(M)}\right)\overset{\text{d}}{\longrightarrow}(I_{l1},I_{l2},\cdots,I_{ln}),\quad \mathrm{as}\ M\rightarrow\infty,
\end{equation}
\begin{equation}
Y_l^{(M)}=\sum\limits_{k=1}^{n}I_{lk}^{(M)}\overset{\text{d}}{\longrightarrow}Y_l=\sum\limits_{k=1}^{n}I_{ln},\quad \mathrm{as}\ M\rightarrow\infty.
\end{equation}
\end{proof}

\begin{thm}
Assume in a 1-1 arena with uniform fluctuations, the total number of players is an even number $M$ and their results $I_{lk}^{(M)}=\mathbf{1}$\{The $l$-th player wins his $k$-th rounds\} form an $M\times n$ sample matrix
\[I_M=\left(\begin{array}{cccc}
I_{11}^{(M)}&I_{12}^{(M)}&\cdots&I_{1n}^{(M)}\\
I_{21}^{(M)}&I_{22}^{(M)}&\cdots&I_{2n}^{(M)}\\
\vdots&\vdots&\ddots&\vdots\\
I_{M1}^{(M)}&I_{M2}^{(M)}&\cdots&I_{Mn}^{(M)}\\
\end{array}\right),\]
where $n\geqslant 2$. Then, the estimator 
\begin{equation}
\hat{\rho_M}=\sqrt{\frac{3-tan^2\pi T_M}{tan^2\pi T_M-1}}
\label{estimator of finite}
\end{equation}
is a consistent estimator of the coefficient of fluctuations $\rho$, where 
\begin{equation}
T_M=\frac{1}{n-1}\left(\frac{1}{Mn}\sum_{l=1}^{M}(Y_{l}^{(M)})^2-\frac{1}{2}\right),\qquad Y_l^{(M)}=\sum_{k=1}^{n}I_{lk}^{(M)},\qquad l=1,2,\cdots,M.
\end{equation}
\label{t stat2}
\end{thm}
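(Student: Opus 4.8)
The plan is to show that $T_M$ converges in probability to the same limiting value
$$T^\ast=\tfrac12-\tfrac1\pi\arctan\sqrt{\tfrac{1+\rho^2}{3+\rho^2}}$$
that drove the infinite-case estimator in Theorem \ref{t stat1}, and then to finish by the continuous mapping theorem exactly as there. Writing $Z_M=\frac{1}{Mn}\sum_{l=1}^M (Y_l^{(M)})^2$, so that $T_M=\frac{1}{n-1}\bigl(Z_M-\frac12\bigr)$, it suffices to prove that $Z_M\to\frac1n\mathbb{E}(Y_1^2)$ in probability, where $Y_1$ is the infinite-case limit variable of Lemma \ref{t dist conv}. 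Because the strong law of large numbers is unavailable once the $Y_l^{(M)}$ are dependent, I would establish this through an $L^2$ argument: show $\mathbb{E}(Z_M)\to\frac1n\mathbb{E}(Y_1^2)$ and $\mathrm{Var}(Z_M)\to0$.

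For the mean, each $Y_l^{(M)}$ is bounded by $n$ and, by the exchangeability of the players, shares the law of $Y_1^{(M)}$. Since $Y_1^{(M)}\overset{\text{d}}{\longrightarrow}Y_1$ by Lemma \ref{t dist conv} and the variables are uniformly bounded, bounded convergence gives $\mathbb{E}\bigl((Y_1^{(M)})^2\bigr)\to\mathbb{E}(Y_1^2)$, whence $\mathbb{E}(Z_M)=\frac1n\mathbb{E}\bigl((Y_1^{(M)})^2\bigr)\to\frac1n\mathbb{E}(Y_1^2)$. Evaluating the limit with Proposition \ref{thm moment} as in the infinite case yields $\mathbb{E}(Z_M)\to\frac{n}{2}-\frac{n-1}{\pi}\arctan\sqrt{\frac{1+\rho^2}{3+\rho^2}}$, so $\mathbb{E}(T_M)\to T^\ast$.

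The variance is the crux. Expanding,
$$\mathrm{Var}(Z_M)=\frac{1}{M^2n^2}\Big(\sum_{l}\mathrm{Var}\big((Y_l^{(M)})^2\big)+\sum_{l\neq l'}\mathrm{Cov}\big((Y_l^{(M)})^2,(Y_{l'}^{(M)})^2\big)\Big).$$
The diagonal sum has $M$ terms each bounded by $n^4$, contributing $O(M)/M^2\to0$. The off-diagonal sum has $\sim M^2$ identically distributed terms, so after division by $M^2$ it vanishes provided the common covariance $c_M:=\mathrm{Cov}\big((Y_1^{(M)})^2,(Y_2^{(M)})^2\big)\to0$. To obtain this I would upgrade Lemma \ref{t dist conv} from a marginal to a pairwise statement, proving $(Y_1^{(M)},Y_2^{(M)})\overset{\text{d}}{\longrightarrow}(Y_1,Y_2)$ with \emph{independent} coordinates. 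The mechanism is the collision argument already used in Lemma \ref{t dist conv}: let $\mathcal{B}$ be the event that players $1$ and $2$ together with their $2n$ opponents over the $n$ rounds are $2n+2$ distinct individuals. A union bound over the $O(n^2)$ pairs of slots, each coinciding with probability $O(1/M)$ by Definition \ref{d matc fty}, gives $\mathbb{P}(\mathcal{B}^c)=O(n^2/M)\to0$. On $\mathcal{B}$ the strengths and fluctuations entering player $1$'s outcomes involve a set of individuals disjoint from those entering player $2$'s, so by Assumption (A2') the two outcome vectors are conditionally independent given $\mathcal{B}$ and the matchings, each carrying the infinite-case conditional law of Theorem \ref{t cond dist}; integrating out the matchings yields convergence of the pair to independent copies $(Y_1,Y_2)$. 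Boundedness of the $(Y_l^{(M)})^2$ then upgrades this to $\mathbb{E}\big((Y_1^{(M)})^2(Y_2^{(M)})^2\big)\to\mathbb{E}(Y_1^2)\mathbb{E}(Y_2^2)$, which together with the marginal moment limits forces $c_M\to0$.

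Combining the two estimates gives $Z_M\to\frac1n\mathbb{E}(Y_1^2)$ in $L^2$, hence in probability, so $T_M\overset{P}{\longrightarrow}T^\ast$. For $\rho>0$ one has $\tan^2(\pi T^\ast)=\frac{3+\rho^2}{1+\rho^2}\in(1,3)$, so the numerator $3-\tan^2\pi T^\ast$ and denominator $\tan^2\pi T^\ast-1$ are strictly positive and the map $t\mapsto\sqrt{\frac{3-\tan^2\pi t}{\tan^2\pi t-1}}$ is continuous at $T^\ast$ (the tangent has no pole there); the continuous mapping theorem then delivers $\hat{\rho}_M\overset{P}{\longrightarrow}\rho$, i.e. consistency. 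The main obstacle is precisely the pairwise decoupling that yields $c_M\to0$: the marginal statement of Lemma \ref{t dist conv} does not suffice, and one must control the joint dependence of two players induced by the shared random matching, which is exactly where the $O(n^2/M)$ collision bound and the conditional-independence-given-disjoint-opponents argument are needed.
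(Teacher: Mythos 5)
Your proposal is correct and follows essentially the same route as the paper's Appendix A: a Chebyshev/$L^2$ argument in which the $M$ diagonal variance terms are bounded by $n^4$ and the off-diagonal covariances vanish because matching collisions occur with probability $O(1/M)$, combined with Lemma \ref{t dist conv} (via Helly's second theorem in the paper, bounded convergence in your write-up) to identify the limiting mean, finishing with continuous mapping. The only difference is one of packaging: you phrase the covariance bound as joint distributional convergence of $\bigl(Y_1^{(M)},Y_2^{(M)}\bigr)$ to independent limits, whereas the paper expands $\mathbb{E}\bigl[(Y_1^{(M)})^2(Y_2^{(M)})^2\bigr]$ term by term and conditions each joint indicator probability on the explicit collision events $\mathcal{B}_k$.
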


\begin{proof}
See Appendix \ref{a}.
\end{proof}

\section{Tests and applications}

In the first two parts, the arena model without fluctuations is evaluated on practical data from FIFA World Cup and Hearthstone. In the third part, we conduct a simulation test for the estimator in Theorem \ref{t stat2}. Then we forge a connection between the arena model with fluctuations and classic models on paired comparisons. Finally, a metric is introduced to quantify the uncertainty in competitions.

\subsection{Test with FIFA World Cup data}

As we all know, knockouts are indispensable for the majority of sports, where our arena model has a good performance. For instance, in every FIFA World Cup, the top two teams of each group advance to the round of 16 and then compete on a knockout stage. The arena model provides a direct way to make inferences for each team. We first stated that we are by no means predicting the results of the next World Cup here, but demonstrating how to apply the method to data produced from knockouts. The results of Brazil, Italy, Argentina and Sweden over 80 years are listed below, where the number 0, 1, 2, 3, 4 and 5 means their final results are respectively ``fails to be top 16'', ``9th$\sim$16th'', ``5th$\sim$8th'', ``3th or 4th'', ``runner-up'' and ``champion''. Next, we use the data in Table 1 to estimate each team's future results and the data in Table 2 to test our estimations.

\begin{table}[!htb]
\centering
\caption{Data for train}\label{table1}
\begin{tabular}{ccccccccccc}
\toprule
Country& 1930& 1938& 1954& 1962& 1970& 1978& 1986& 1994& 2002& 2010\\
\midrule
Brazil& 3& 3& 2& 5& 5& 3& 2& 5& 5& 2\\
Italy& 0& 5& 1& 1& 4& 3& 1& 4& 1& 0\\
Argentina& 4& 0& 0& 1& 0& 5& 5& 1& 0& 2\\
Sweden& 0& 3& 0& 0& 1& 1& 0& 3& 1& 0\\
\bottomrule
\end{tabular}
\end{table}

\begin{table}[!htb]
\centering
\caption{Data for test}\label{table2}
\begin{tabular}{ccccccccccc}
\toprule
Country& 1934& 1950& 1958& 1966& 1974& 1982& 1990& 1998& 2006& 2014\\
\midrule
Brazil& 1& 4& 5& 1& 3& 2& 1& 4& 2& 3\\
Italy& 5& 2& 0& 1& 1& 5& 3& 2& 5& 0\\
Argentina& 1& 0& 1& 2& 2& 1& 4& 2& 2& 4\\
Sweden& 2& 3& 4& 0& 2& 0& 0& 0& 1& 0\\
\bottomrule
\end{tabular}
\end{table}

Even though the FIFA World Cup involves many random factors and the strengths of teams change occasionally, we view it as a 5-1 arena without fluctuations here to show some rough prediction results. According to the Theorem \ref{t post dist invari}, we can just assume $p(x)=1(0\leqslant x\leqslant 1)$ without loss of generality. Theorem \ref{t prob no infer1}, \ref{t dens of str}, \ref{t prob with infer} and \ref{t post dest} yield
\begin{equation}
\left\{\begin{aligned}
p_0&=1/2\\
p_1&=1/4\\
p_2&=1/8\\
p_3&=1/16\\
p_4&=1/32\\
p_5&=1/32\\
\end{aligned}
\right.\qquad and \qquad
\left\{\begin{aligned}
p_0(x)&=1-x\\
p_1(x)&=x(1-x^2)\\
p_2(x)&=x^3(1-x^4)\\
p_3(x)&=x^7(1-x^8)\\
p_4(x)&=x^{15}(1-x^{16})\\
p_5(x)&=x^{31},\\
\end{aligned}\right.
\end{equation}
where 
\begin{equation}
p_i=\begin{cases}
\mathbb{P}(\mathcal{A}_{i,1}),&0\leqslant i\leqslant 4,\\
\mathbb{P}(\mathcal{A}_{5,0}),&i=5,
\end{cases}\quad and \quad p_i(x)=\begin{cases}
\mathbb{P}(\mathcal{A}_{i,1}|X=x),&0\leqslant i\leqslant 4,\\
\mathbb{P}(\mathcal{A}_{5,0}|X=x),&i=5,
\end{cases}
\end{equation}
$\mathcal{A}_{i,j}$ and $X$ follow Theorem \ref{t prob with infer}. Then we have
\begin{equation}
\mathbb{P}\bigl(\xi=k|(N_0,N_1,\cdots,N_5)\bigr)=\frac{\int_0^1p_k(x)\prod\limits_{i=0}^5[p_i(x)]^{N_i}\dif x}{\int_0^1\prod\limits_{i=0}^5[p_i(x)]^{N_i}\dif x},
\end{equation}
where $N_i$ and $\xi$ represent the times that the result of a team is $i$ in Table 1 and its future result respectively. As a comparison, the sample mean estimator is also computed.

\begin{table}[!htbp]
\centering
\begin{tabular}{|c|c|c|c|c|c|c|c|c|c|c|c|c|}
\hline
\multirow{2}*{$\xi$}& \multicolumn{3}{|c|}{Brazil}& \multicolumn{3}{|c|}{Italy}& \multicolumn{3}{|c|}{Argentina}& \multicolumn{3}{|c|}{Sweden}\\
\cline{2-13}
& F& P1& P2& F& P1& P2& F& P1& P2& F& P1& P2\\
\hline
0& 0& 0.04& 0& 0.2& 0.1& 0.2& 0.1& 0.09& 0.4& 0.4& 0.33& 0.5\\
\hline
1& 0.3& 0.08& 0& 0.2& 0.17& 0.4& 0.3& 0.15& 0.2& 0.2& 0.35& 0.3\\
\hline
2& 0.2& 0.13& 0.3& 0.2& 0.24& 0& 0.4& 0.23& 0.1& 0.2& 0.23& 0\\
\hline
3& 0.2& 0.20& 0.3& 0.1& 0.26& 0.1& 0& 0.26& 0& 0.1& 0.07& 0.2\\
\hline
4& 0.2& 0.24& 0& 0& 0.17& 0.2& 0.2& 0.19& 0.1& 0.1& 0& 0\\
\hline
5& 0.1& 0.31& 0.4& 0.3& 0.06& 0.1& 0& 0.08 & 0.2& 0& 0& 0\\
\hline
\end{tabular}
\caption{Results within different methods}
\end{table}

In Table 3,  the ``F'' refers to the frequency of different results of these four teams in Table 2. The ``P1'' and ``P2'' are the results of our model and sample average with the data in Table 1 respectively. Indeed, Assumptions (A1), (A2) and (A3) in our model do not hold in this example. In fact, the prediction of results of FIFA World Cup is extremely complex, and we cannot solve it simply by the current arena model. Nevertheless, the arena model without fluctuations enables us to quantify individuals' strengths easily, which may bring great convenience when applied in biology and machine learning.

\subsection{Application in Hearthstone}

Actually, our basic ideas are inspired by a game mode of \emph{Hearthstone}---a hot game of Blizzard---called \emph{``Arena''}, whose rules are as follows (for more details, see \cite{hearth1}).\\
(1) After paying the entry fee, the player will build a deck in a specific way, which has some uncertainty and is similar to (R1). (Of course, a deck differs from a random number.)\\
(2) Playing in Arena consists of a series of matches between Arena players, which usually works like (R3).\\
(3) During each Arena run, a player can suffer up to three losses. Once a player has lost three times or won 12 times, his run ends.

Since the way players build their decks is complicated, instead of applying arenas to this mode, we study a similar mode called ``Standard Brawliseum''. In this mode, players build their decks by themselves, and our Assumption (A1) holds approximately (for more details, see \cite{hearth2}). Some uncertainty does exist during each round of it, but we still regard ``Standard Brawliseum'' as a 12-3 arena without fluctuations here to give a simple estimation using the past match data.

Suppose a player have obtained 12-0, 10-3, 6-3, 12-2 in four runs with the same deck, now we want to give an estimation of the probability that he obtains different results with this deck in the future. By Theorem \ref{t prob no infer1} and \ref{t post dist invari}, we can express the posterior probability that the player will end with $(m,j)$ or $(i,n)$ in a run in
\begin{equation}
\binom{m+j-1}{m-1}(\frac{1}{2})^{m+j}\frac{\int_{0}^{1}p_{m,j}(t)\prod_{(k,l)\in\Lambda}\bigl[p_{k,l}(t)\bigr]^{N_{k,l}}\dif t}{\int_{0}^{1}\prod_{(k,l)\in\Lambda}\bigl[p_{k,l}(t)\bigr]^{N_{k,l}}\dif t}
\label{e apply1}
\end{equation}
and
\begin{equation}
\binom{n+i-1}{n-1}(\frac{1}{2})^{n+i}\frac{\int_{0}^{1}p_{i,n}(t)\prod_{(k,l)\in\Lambda}\bigl[p_{k,l}(t)\bigr]^{N_{k,l}}\dif t}{\int_{0}^{1}\prod_{(k,l)\in\Lambda}\bigl[p_{k,l}(t)\bigr]^{N_{k,l}}\dif t}
\label{e apply2}
\end{equation}
respectively, where $j=0,1,\cdots,n-1$, $i=0,1,\cdots,m-1$, $N_{k,l}$ refers to the times that the player's result is $(k,l)$ in the past data. The $\Lambda$, $p_{k,l}(\cdot)$ follow Definition \ref{d dist arena} and Theorem \ref{t dens of str} (choose $p(x)=\mathbf{1}\{0\leqslant x\leqslant 1\}$). Since $m=12$ is large, we use difference quotients and trapezoid formula to compute the density function and integral respectively. Plugging known quantity into (\ref{e apply1}) and (\ref{e apply2}) gives the table 4.

\begin{table}[!htbp]
\centering
\caption{Estimation of result}\label{table4}
\begin{tabular}{cccccccccccccccc}
\toprule
Result& 12-0& 12-1& 12-2& 11-3& 10-3& 9-3& 8-3& 7-3\\
\midrule
Probability& $3.0*10^{-4}$& $0.20$& $0.72$& $2.6*10^{-2}$& $5.5*10^{-3}$& $6.5*10^{-4}$& $4.2*10^{-5}$& $1.6*10^{-6}$\\
\bottomrule
\end{tabular}
\end{table}

Although we predict the results easily by applying the arena model without fluctuations, it seems not to fit his past results well. Fluctuations do exist! So much further research work is required to improve the arena model. For example, how to predict in an arena with fluctuations of more general form, and how can we judge whether there are fluctuations in an arena. We do not answer this question in this paper and will solve these problems in the future work.

\subsection{Simulations}

Here is a test for the estimation of $\rho$ in Theorem \ref{t stat2}. For $\rho=0.1,0.5,1,2,4$ and 6, $N=1000$ groups of random numbers are generated for $M=1024,n=8$ and $M=1024,n=16$ and $M=8192,n=8$ respectively. The simulative results of the mean and MSE of the estimator are listed below. In fact, the estimator is not valid for all samples, so we here set $\hat{\rho}=0$ if $T_M>\frac{1}{3}$ and $\hat{\rho}=10$ if $T_M<\frac{1}{4}$.
\begin{table}[!htbp]
\centering
\begin{tabular}{|c|c|c|c|c|c|c|}
\hline
\multicolumn{7}{|c|}{$N=1000$}\\ 
\hline
\multirow{2}*{$\rho$}& \multicolumn{2}{|c|}{M=1024,n=8}& \multicolumn{2}{|c|}{M=1024,n=16}& \multicolumn{2}{|c|}{M=8192,n=8}\\
\cline{2-7}
& E& MSE& E& MSE& E& MSE\\
\hline
0.1& 0.099& 0.011& 0.092& 0.007& 0.087& 0.004\\
\hline
0.5& 0.496& 0.004& 0.498& 0.002& 0.499& 0.0004\\
\hline
1& 0.997& 0.003& 1.001& 0.002& 0.999& 0.0004\\
\hline
2& 2.003& 0.014& 2.001& 0.006& 2.001& 0.002\\
\hline
4& 4.074& 0.232& 4.018& 0.072& 4.002& 0.024\\
\hline
6& 6.406& 3.597& 6.083& 0.614& 6.034& 0.188\\
\hline
\end{tabular}
\caption{Test of $\hat{\rho}$}
\end{table}

Evidently, our estimation of $\rho$ performs well in both accuracy and stability when $M$ and $n$ is relatively large. We can define the index of competition $\beta=\frac{1}{1+\rho}$, which is a better metric of the fluctuations of an arena in practice since it is a bijection from $[0,+\infty]$ to $[0,1]$ and can reduce relative error. Furthermore, $\beta$ has significance in sports that it will be close to 1 if a game has little risk and relies almost on players' capacity and 0 otherwise, as a measurement of the stability and fairness of a game.

\subsection{About paired comparison and Bradley-Terry model}

In the Bradley-Terry model, the probability that object $i$ is judged to have more of an attribute than object $j$ is
\begin{equation}
\mathbb{P}(X_{ij}=1)=\frac{e^{\delta_i-\delta_j}}{1+e^{\delta_i-\delta_j}},
\end{equation}
where $\delta_{i}$ is the scale location of object $i$. In our model, if $\hat{\rho}$ is an estimator of $\rho$, we have 
\begin{equation}
\begin{split}
\mathbb{P}(\mathrm{player\ }a_i\mathrm{\ beats\ player\ }a_j|X_i=x_i,X_j=x_j)&=\mathbb{P}(x_i+\frac{\rho}{\sqrt{2}}\epsilon_i>x_j+\frac{\rho}{\sqrt{2}}\epsilon_j)\\
&\approx\mathbb{P}(x_i+\frac{\hat{\rho}}{\sqrt{2}}\epsilon_i>x_j+\frac{\hat{\rho}}{\sqrt{2}}\epsilon_j)\\
&=\Phi\left(\frac{x_i-x_j}{\hat{\rho}}\right).
\label{e prob arena}
\end{split}
\end{equation}
Here we regard $i$ and $j$ as two players and ``player $a_i$ beats player $a_j$'' means ``the agent prefers $i$ over $j$'' and try to find a direct association between the scale location $\delta_i$ and the strength $x_i$. Tocher introduce an approximation of standard normal distribution in \cite{AS} that 
\begin{equation}
\Phi(x)\approx\frac{e^{2kx}}{e^{2kx}+1},\quad k=\sqrt{\frac{2}{\pi}}.
\end{equation}
We now apply this to derive an estimation of $\delta_i$ with $\hat{\rho}$ in our model. Let $\mathbb{P}(X_{ij}=1)=\mathbb{P}(\mathrm{player\ }a_i\mathrm{\ beats\ player\ }a_j|X_i=x_i,X_j=x_j)$ and we have
\begin{equation}
\frac{e^{\delta_i-\delta_j}}{1+e^{\delta_i-\delta_j}}\approx\Phi\left(\frac{x_i-x_j}{\hat{\rho}}\right)\approx\frac{e^{2k(x_i-x_j)/\hat{\rho}}}{1+e^{2k(x_i-x_j)/\hat{\rho}}},
\end{equation}
which implies
\begin{equation}
\delta_i-\delta_j\approx\frac{2k(x_i-x_j)}{\hat{\rho}}.
\end{equation}
Hence
\begin{equation}
\frac{2k\hat{x_i}}{\hat{\rho}}=\frac{2k\sqrt{1+\hat{\rho}^2}}{\hat{\rho}}\Phi^{-1}\left(\frac{Y_i^{(M)}}{n}\right)
\end{equation}
in arena model can be regarded as an estimation of $\delta_i$ in Bradley-Terry model. Further, Glickman \cite{PEL} gives a reparameterized version of the Bradley-Terry model by assuming the prior distribution of a player's strength is 
\begin{equation}
\theta|\mu,\sigma^2\sim N(\mu,\sigma^2)
\label{e dist theta}
\end{equation}  
and the likelihood that $i$ beats $j$ in the $k$-th round is given by
\begin{equation}
\mathbb{P}(s_{ijk}=1|\theta_i,\theta_j)=\frac{10^{(\theta_i-\theta_j)/400}}{1+10^{(\theta_i-\theta_j)/400}}.
\label{e prob bt}
\end{equation}
To establish connection between it and our model, we assume that $\theta_i$ is independent of $\theta_j$ and $\sigma_i^2=\sigma_j^2=\sigma^2$ and the following proposition is required.

\begin{prop}
Suppose $\xi\sim N(\mu,\sigma^2)$ and $\Phi(\cdot)$ is the culmulative distribution function of the standard normal distribution. Then
\begin{equation}
\mathbb{E}\Phi(\xi)=\Phi\bigl(\frac{\mu}{\sqrt{1+\sigma^2}}\bigr).
\label{e exphi}
\end{equation}
\label{t exphi}
\end{prop}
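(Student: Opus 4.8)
The plan is to avoid computing the Gaussian integral directly and instead exploit the probabilistic meaning of $\Phi$. Observe that $\Phi$ is itself a probability: if $Z\sim N(0,1)$ is drawn independently of $\xi$, then $\Phi(t)=\mathbb{P}(Z\leqslant t)$ for every real $t$, and consequently $\Phi(\xi)=\mathbb{P}(Z\leqslant\xi\mid\xi)$. Taking expectations and applying the tower property (the law of total expectation) gives
\begin{equation}
\mathbb{E}\Phi(\xi)=\mathbb{E}\bigl[\mathbb{P}(Z\leqslant\xi\mid\xi)\bigr]=\mathbb{P}(Z\leqslant\xi)=\mathbb{P}(Z-\xi\leqslant 0).
\end{equation}
This converts the expectation of a composite function into the value of a single Gaussian distribution function, which is exactly the shape of the desired right-hand side. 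This is precisely the same device already used in the proof of Theorem \ref{t cond dist}, where $\Phi(\lambda/\sqrt{1+\rho^2})$ arose as the probability that a difference of independent normals falls below $\lambda$.

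Next I would identify the law of $Z-\xi$. Since $Z\sim N(0,1)$ and $\xi\sim N(\mu,\sigma^2)$ are independent, their difference is again normal, with mean $0-\mu=-\mu$ and variance $1+\sigma^2$ (variances add under independence), so $Z-\xi\sim N(-\mu,\,1+\sigma^2)$. Standardizing yields
\begin{equation}
\mathbb{P}(Z-\xi\leqslant 0)=\Phi\!\left(\frac{0-(-\mu)}{\sqrt{1+\sigma^2}}\right)=\Phi\!\left(\frac{\mu}{\sqrt{1+\sigma^2}}\right),
\end{equation}
which is the claimed identity. The case $\mu=0$ recovers the symmetry value $\tfrac12$, a useful consistency check, and matches the $\mathbb{E}\xi=\tfrac12$ computation in Proposition \ref{thm moment}.

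There is no serious obstacle in this argument; the only points demanding care are the justification of the representation $\Phi(\xi)=\mathbb{P}(Z\leqslant\xi\mid\xi)$ and the interchange of expectation with the conditional probability, both of which follow from the independence of $Z$ and $\xi$ together with Fubini's theorem (the integrand is bounded by $1$, so integrability is immediate). Should one prefer an entirely self-contained computation, the alternative route is to write $\mathbb{E}\Phi(\xi)=\int_{-\infty}^{+\infty}\Phi(x)\,\tfrac{1}{\sqrt{2\pi}\,\sigma}e^{-(x-\mu)^2/(2\sigma^2)}\,\dif x$, differentiate with respect to $\mu$ under the integral sign (the domination and uniform-convergence checks mirror those carried out in the proof of Proposition \ref{thm moment}), obtain a clean derivative in $\mu$, and integrate back using the value at $\mu=0$; this reproduces the same formula but is computationally heavier, so I would present the probabilistic proof as the main one.
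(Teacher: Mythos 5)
Your proof is correct, but it takes a genuinely different route from the paper. The paper proves the identity analytically: it sets $f(\mu)=\int_{-\infty}^{+\infty}\Phi(x)\frac{1}{\sqrt{2\pi}\sigma}e^{-(x-\mu)^2/(2\sigma^2)}\,\dif x$, justifies differentiation under the integral sign via a uniform-convergence/domination check, computes $f'(\mu)=\frac{1}{\sqrt{2\pi}\sqrt{1+\sigma^2}}e^{-\mu^2/(2(1+\sigma^2))}$, and integrates back from the initial value $f(0)=\tfrac12$ --- exactly the ``alternative route'' you sketch at the end. Your main argument instead introduces an independent $Z\sim N(0,1)$, writes $\mathbb{E}\Phi(\xi)=\mathbb{P}(Z\leqslant\xi)=\mathbb{P}(Z-\xi\leqslant 0)$ by the tower property, and reads off the answer from $Z-\xi\sim N(-\mu,1+\sigma^2)$. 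Both are valid; yours is shorter, avoids the differentiation-under-the-integral bookkeeping entirely (boundedness of $\Phi$ by $1$ makes the Fubini step immediate), and makes transparent \emph{why} the variance inflates to $1+\sigma^2$, whereas the paper's calculation is self-contained at the level of elementary calculus and reuses the exact template of its earlier second-moment computation in Proposition 4.3, which is presumably why the authors chose it. One small point of care in your version: the representation $\Phi(\xi)=\mathbb{P}(Z\leqslant\xi\mid\xi)$ requires $Z$ to be constructed on (an enlargement of) the same probability space independently of $\xi$, which you do note; with that in place there is no gap.
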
 

\begin{proof}
Define 
\begin{equation}
f(\mu)=\mathbb{E}\Phi(\xi)=\int_{-\infty}^{+\infty}\Phi(x)\frac{1}{\sqrt{2\pi}\sigma}e^{-\frac{(x-\mu)^2}{2\sigma^2}}\dif x.
\label{e def f}
\end{equation}
Since 
\begin{equation}
g(\mu,x)=\frac{1}{\sqrt{2\pi}\sigma}\Phi(x)e^{-\frac{(x-\mu)^2}{2\sigma^2}}\geqslant 0
\end{equation}
is continuous differentiable on $\mathbb{R}^2$ and
\begin{equation}
\int_{-\infty}^{+\infty}g(\mu,x)\dif x\leqslant \int_{-\infty}^{+\infty}\frac{1}{\sqrt{2\pi}\sigma}e^{-\frac{(x-\mu)^2}{2\sigma^2}}\dif x=1<\infty,
\end{equation}
\begin{equation}
\int_{-\infty}^{+\infty}\left|\frac{\partial}{\partial\mu}g(\mu,x)\right|\dif x\leqslant \frac{2}{\sqrt{2\pi}\sigma}\int_{-\infty}^{+\infty}\frac{x-\mu}{\sigma^2}e^{-\frac{(x-\mu)^2}{2\sigma^2}}\dif x=\frac{2}{\sqrt{2\pi}\sigma}<\infty,
\end{equation}
the integral $\int_{-\infty}^{+\infty}\frac{\partial}{\partial\mu}g(\mu,x)\dif x$ is uniformly convergent for $\mu\in\mathbb{R}$. Hence,
\begin{equation}
\begin{split}
f'(\mu)&=\int_{-\infty}^{+\infty}\frac{\partial}{\partial\mu}g(\mu,x)\dif x\\
&=-\frac{1}{\sqrt{2\pi}\sigma}\int_{-\infty}^{+\infty}\Phi(x)\dif e^{-\frac{(x-\mu)^2}{2\sigma^2}}\\
&=\frac{1}{2\pi\sigma}\int_{-\infty}^{+\infty}e^{-\frac{(1+\sigma^2)x^2-2\mu x+\mu^2}{2\sigma^2}}\dif x\\
&=\frac{1}{\sqrt{2\pi}\sqrt{1+\sigma^2}}e^{-\frac{\mu^2}{2(1+\sigma^2)}}.
\end{split}
\label{e fprime}
\end{equation}
Also,
\begin{equation}
f(0)=\frac{1}{\sqrt{2\pi}\sigma}\int_{-\infty}^{+\infty}\Phi(x)e^{-\frac{x^2}{2\sigma^2}}\dif x=\frac{1}{2}.
\label{e f0}
\end{equation}
By (\ref{e def f}), (\ref{e fprime}) and (\ref{e f0}), we have
\begin{equation}
\begin{split}
f(\mu)&=\frac{1}{2}+\int_{0}^{\mu}\frac{1}{\sqrt{2\pi}\sqrt{1+\sigma^2}}e^{-\frac{x^2}{2(1+\sigma^2)}}\dif x\\
&=\Phi\bigl(\frac{\mu}{\sqrt{1+\sigma^2}}\bigr).
\end{split}
\end{equation}
\end{proof}
According to (\ref{e prob arena}), (\ref{e dist theta}) and (\ref{e prob bt}), the above theorem gives
\begin{equation}
\begin{split}
\mathbb{P}(s_{ijk}=1)&=\mathbb{E}\frac{10^{(\theta_i-\theta_j)/400}}{1+10^{(\theta_i-\theta_j)/40}}\\
&\approx\mathbb{E}\Phi\bigl(\frac{\theta_i-\theta_j}{800k}ln10\bigr)\\
&=\Phi\biggl(\frac{(\mu_i-\mu_j)\frac{ln10}{800k}}{\sqrt{1+2\sigma^2(\frac{ln10}{800k})^2}}\biggr).
\end{split}
\end{equation}
Let $\mathbb{P}(s_{ijk}=1)=\mathbb{P}(a_i\mathrm{\ beats\ }a_j)$, we obtain an approximation that
\begin{equation}
\frac{(\mu_i-\mu_j)\frac{ln10}{800k}}{\sqrt{1+2\sigma^2(\frac{ln10}{800k})^2}}\approx\frac{x_i-x_j}{\rho}.
\end{equation}
If we already have an estimation of $x_i,x_j$ and $\rho$, namely $\hat{x_i},\hat{x_j}$ and $\hat{\rho}$, since $|\mu_i-\mu_j|\leqslant 3000$ and $\mathbb{P}(|x_i-x_j|\leqslant\frac{15}{4k})>0.99$ (the $\frac{15}{4k}$ is chosen subjectively and is supposed to be determined by actual data), we can estimate $(\mu_i,\mu_j,\sigma^2)$ by
\begin{equation}
\left\{\begin{aligned}
\hat{\mu_i}&=800k\hat{x_i}\\
\hat{\mu_j}&=800k\hat{x_j}\\
\hat{\sigma}^2&=320000k^2\left(\hat{\rho}^2-\frac{1}{ln^{2}10}\right).
\end{aligned}
\right.
\end{equation}
The inverse estimation using $\hat{\mu_i},\hat{\mu_j},\hat{\sigma}^2$ is also available.

\subsection{A metric of chaos}

For any given results in a 1-1 arena with uniform fluctuations, its coefficient of fluctuations can be computed by (\ref{estimator of finite}), which represents the instability and volatility of a competition. In a broad sense, for a matrix $A\in\mathbb{B}^{M\times n}$ which is made up of dummy variables and satisfies
\begin{equation}
\mathbf{1}_M'A=\frac{M}{2}\mathbf{1}_n'\ or\ \mathbf{1}_M'A\approx\frac{M}{2}\mathbf{1}_n',
\label{e defn matrix}
\end{equation}
$$T(A)=\frac{1}{n-1}\left(\frac{\mathbf{1}_n'A'A\mathbf{1}_n}{Mn}-\frac{1}{2}\right)\ or\ \rho(A)=\sqrt{\frac{3-tan^2\pi T(A)}{tan^2\pi T(A)-1}}\ or\ \beta(A)=\frac{1}{1+\rho(A)}$$
can be viewed as metrics of instability and randomness of a matrix $A$, based on the competition. We call any logical matrix satisfying (\ref{e defn matrix}) a win-loss matrix and $\beta(A)$ the competition index of $A$. As we stated before, for a win-loss matrix $A$, if $\beta(A)$ is close to 0, there is irrelevance among rows of $A$ in the sense of competition; if $\beta(A)$ is close to $1$, the competitive relationship among its rows exists, which is valuable when analyzing dummy variables. For instance, consider a matrix
\begin{equation}
A_n=\left(\begin{matrix}0& 0& \cdots& 0& 0\\
0& 0& \cdots& 0& 1\\
0& 0& \cdots& 1& 0\\
0& 0& \cdots& 1& 1\\
\vdots& \vdots& \ddots& \vdots& \vdots\\
1& 1& \cdots& 1& 0\\
1& 1& \cdots& 1& 1
\end{matrix}\right)\in\mathbb{B}^{2^n\times n},
\end{equation}
whose rows are $0,1,\cdots,M-1=2^n-1$ respectively, expressed in the binary numeral system with digits $n$. By simple computation, we have 
\begin{equation}
\begin{split}
\mathbf{1}_{M}'A_n&=2^{n-1}\mathbf{1}_n',\\
\mathbf{1}_n'A_n'A_n\mathbf{1}_n=\sum\limits_{k=0}^{n}k^2\binom{n}{k}&=n\left[2^{n-1}+(n-1)2^{n-2}\right],
\end{split}
\end{equation}
which means that $A_n$ is a win-loss matrix and its competition index $\beta(A_n)=0$; that is to say there is no competitive relationship between $A_n$.

Here we only apply these measurements to a specific matrix. They can serve as metrics of the degree of chaos when analyzing several groups of empirical binary data, which is likely to represent some quantities significant but still unknown in biology and machine learning.

\section{Conclusions}

Firstly, we introduce arenas without fluctuations to describe competitions and give an invariant estimator. These work made up the framework of inferences about comparisons without ratings. Secondly, we provide a simple estimator to detect to which extent the outcomes of competitions are determined by ``chance'' as opposed to ``skill''. It should be stressed here that we concern about the cases that the number of players is large enough. These conclusions are also based on this assumption.

In this paper, we only study paired competitions that there is only one winner in each round consisting of 2 players. Some further study can be done about the arena where $p$ individuals win out from $q$ individuals. Besides, we only study the simplest case of arena model with fluctuations. There is still much challenging generalization work to be done to improve the arena model.

\section*{\centering{Appendix A: Proof of Theorem 5.2}}

\begin{proof}
Obviously, for arbitrary $M\in2\mathbb{N}$, $Y_1^{(M)},\cdots,Y_n^{(M)}$ are distributed identically, as well as $Y_1,\cdots,Y_n$. Suppose $\mathbb{E}(Y_1^{(M)})^2=\mu_M$ and $\mathbb{E}Y_1^2=\mu$. Then for arbitrary $\epsilon>0$, Chebyshev's inequality gives
\begin{equation}
\begin{split}
&\mathbb{P}\left(\left|\frac{(Y_1^{(M)})^2+\cdots+(Y_M^{(M)})^2}{M}-\mu\right|\leqslant \epsilon\right)\\
\leqslant&\mathbb{P}\left(\left|\frac{(Y_1^{(M)})^2+\cdots+(Y_M^{(M)})^2}{M}-\mu_M\right|\leqslant \frac{\epsilon}{2}\right)+\mathbb{P}(|\mu_M-\mu|\leqslant\frac{\epsilon}{2})\\
\leqslant&\frac{4}{M^2\epsilon^2}Var\left[(Y_1^{(M)})^2+\cdots+(Y_M^{(M)})^2\right]+\mathbb{P}(|\mu_M-\mu|\leqslant\frac{\epsilon}{2})\\
=&\frac{4}{M^2\epsilon^2}\Bigl\{\sum_{l=1}^{M}Var(Y_l^{(M)})^2+2\sum_{1\leqslant i<j\leqslant M}Cov\left[(Y_i^{(M)})^2,(Y_j^{(M)})^2\right]\Bigr\}+\mathbb{P}(|\mu_M-\mu|\leqslant\frac{\epsilon}{2}).
\end{split}
\label{e prob epsilon leq}
\end{equation}
For any fixed positive integers $n\geqslant 2$,
\begin{equation}
\forall M\in2\mathbb{N},Var(Y_l^{(M)})^2\leqslant \mathbb{E}(Y_l^{(M)})^4\leqslant n^4.
\end{equation}
By Lemma \ref{t dist conv} and Helly's second theorem,
\begin{equation}
\mu_M=\mathbb{E}Y_l^{(M)}\rightarrow \mu=\mathbb{E}Y_l,\ \mathrm{as}\ M\rightarrow\infty.
\label{e expe conv}
\end{equation}
It remains to show that the upper bound of $Cov\left[(Y_i^{(M)})^2,(Y_j^{(M)})^2\right]$ can be controled by an $o(1)$ when $M\rightarrow\infty$. Due to the symmetry relation among
$$(I_{11},I_{12},\cdots,I_{1n}),(I_{21},I_{22},\cdots,I_{2n}),\cdots,(I_{M1},I_{M2},\cdots,I_{Mn}),$$
we can simplify $Cov\left[(Y_1^{(M)})^2,(Y_2^{(M)})^2\right]$ by
\begin{equation}
\begin{split}
&\mathbb{E}\left(I_{11}^{(M)}+\cdots+I_{1n}^{(M)}\right)^2\left(I_{21}^{(M)}+\cdots+I_{2n}^{(M)}\right)^2-\left[\mathbb{E}(Y_1^{(M)})^2\right]^2\\
=&\mathbb{E}\left[n(I_{11}^{(M)})^2+n(n-1)I_{11}^{(M)}I_{12}^{(M)}\right]\left(I_{21}^{(M)}+\cdots+I_{2n}^{(M)}\right)^2-\left[\mathbb{E}(Y_1^{(M)})^2\right]^2\\
=&n\mathbb{E}I_{11}^{(M)}\left(I_{21}^{(M)}+\cdots+I_{2n}^{(M)}\right)^2+n(n-1)\mathbb{E}I_{11}^{(M)}I_{12}^{(M)}\left(I_{21}^{(M)}+\cdots+I_{2n}^{(M)}\right)^2-\left[\mathbb{E}(Y_1^{(M)})^2\right]^2.
\end{split}
\label{e cov equal}
\end{equation}
For $k$=1,2,3,4, define
$$\mathcal{B}_k=\{a_1\mathrm{\ is\ the\ opponent\ of\ }a_2\mathrm{ \ in\ the\ }k\mathrm{-th\ round}\},$$
$$\bar{\mathcal{B}_k}=\{a_1\mathrm{\ is\ not\ the\ opponent\ of\ }a_2\mathrm{ \ in\ the\ }k\mathrm{-th\ round}\}.$$
Then we have
\begin{equation}
\begin{split}
&Cov\left[(Y_1^{(M)})^2,(Y_2^{(M)})^2\right]+\left[\mathbb{E}(Y_1^{(M)})^2\right]^2\\
=&n\mathbb{P}\left(I_{11}^{(M)}=I_{21}^{(M)}=1\right)+n(n-1)\mathbb{P}\left(I_{11}^{(M)}=I_{22}^{(M)}=1\right)+4n(n-1)\mathbb{P}\left(I_{11}^{(M)}=I_{21}^{(M)}=I_{22}^{(M)}=1\right)\\
&+2n(n-1)(n-2)\mathbb{P}\left(I_{11}^{(M)}=I_{22}^{(M)}=I_{23}^{(M)}=1\right)+2n(n-1)\mathbb{P}\left(I_{11}^{(M)}=I_{12}^{(M)}=I_{21}^{(M)}=I_{22}^{(M)}=1\right)\\
&+4n(n-1)(n-2)\mathbb{P}\left(I_{11}^{(M)}=I_{12}^{(M)}=I_{21}^{(M)}=I_{23}^{(M)}=1\right)\\
&+n(n-1)(n-2)(n-3)\mathbb{P}\left(I_{11}^{(M)}=I_{12}^{(M)}=I_{23}^{(M)}=I_{24}^{(M)}=1\right)\\
\leqslant&n\mathbb{P}(\bar{\mathcal{B}_1})\mathbb{P}\left(I_{11}^{(M)}=I_{21}^{(M)}=1\middle|\bar{\mathcal{B}_1}\right)+n(n-1)\left[\mathbb{P}(\mathcal{B}_2)+\mathbb{P}(\bar{\mathcal{B}_2})\mathbb{P}\left(I_{11}^{(M)}=I_{22}^{(M)}=1\middle|\bar{\mathcal{B}_2}\right)\right]\\
&+2n(n-1)(n-2)\left[\mathbb{P}(\mathcal{B}_1)+\mathbb{P}(\bar{\mathcal{B}_1})\mathbb{P}\left(I_{11}^{(M)}=I_{22}^{(M)}=I_{23}^{(M)}=1\middle|\bar{\mathcal{B}_1}\right)\right]\\
&+4n(n-1)\mathbb{P}(\bar{\mathcal{B}_1})\mathbb{P}\left(I_{11}^{(M)}=I_{21}^{(M)}=I_{22}^{(M)}=1\middle|\bar{\mathcal{B}_1}\right)\\
&+2n(n-1)\mathbb{P}(\bar{\mathcal{B}_1},\bar{\mathcal{B}_2})\mathbb{P}\left(I_{11}^{(M)}=I_{12}^{(M)}=I_{21}^{(M)}=I_{22}^{(M)}=1\middle|\bar{\mathcal{B}_1},\bar{\mathcal{B}_2}\right)\\
&+4n(n-1)(n-2)\left[\mathbb{P}(\mathcal{B}_3)+\mathbb{P}(\bar{\mathcal{B}_1},\bar{\mathcal{B}_3})\mathbb{P}\left(I_{11}^{(M)}=I_{12}^{(M)}=I_{21}^{(M)}=I_{23}^{(M)}=1\middle|\bar{\mathcal{B}_1},\bar{\mathcal{B}_3}\right)\right]\\
&+n(n-1)(n-2)(n-3)\left[\mathbb{P}(\mathcal{B}_3)+\mathbb{P}(\mathcal{B}_4)+\mathbb{P}(\bar{\mathcal{B}_3},\bar{\mathcal{B}_4})\mathbb{P}\left(I_{11}^{(M)}=I_{12}^{(M)}=I_{23}^{(M)}=I_{24}^{(M)}=1\middle|\bar{\mathcal{B}_3},\bar{\mathcal{B}_4}\right)\right].
\end{split}
\label{e cov leq}
\end{equation}
Firstly by Assumption (A1'), (A3') and Definition \ref{d matc fty}, we have
\begin{equation}
\mathbb{P}(\mathcal{B}_k)=\frac{1}{M-1}=o(1),\ \mathrm{as}\ M\rightarrow\infty,\quad k=1,2,3,4.
\label{e prob o}
\end{equation}
Secondly take $\mathbb{P}\left(I_{11}^{(M)}=I_{12}^{(M)}=I_{23}^{(M)}=I_{24}^{(M)}=1\middle|\bar{\mathcal{B}_3},\bar{\mathcal{B}_4}\right)$ as an example and we know that 
\[\begin{split}
&\mathbb{P}\left(I_{11}^{(M)}=I_{12}^{(M)}=I_{23}^{(M)}=I_{24}^{(M)}=1\middle|\bar{\mathcal{B}_3},\bar{\mathcal{B}_4}\right)\\
=&\mathbb{P}\left(I_{11}^{(M)}=I_{12}^{(M)}=I_{23}^{(M)}=I_{24}^{(M)}=1\middle|\sigma_3(2)\neq1,\sigma_4(2)\neq1\right)\\
=&\mathbb{P}\left(I_{23}^{(M)}=I_{24}^{(M)}=1\middle|I_{11}^{(M)}=I_{12}^{(M)}=1,\sigma_3(2)\neq1,\sigma_4(2)\neq1\right)\mathbb{P}\left(I_{11}^{(M)}=I_{12}^{(M)}=1\middle|\sigma_3(2)\neq1,\sigma_4(2)\neq1\right)\\
=&\biggl(\frac{1}{2}-\frac{1}{\pi}arctan\sqrt{\frac{1+\rho^2}{3+\rho^2}}\biggr)^2.
\end{split}\]
The last equation uses the fact that in the finite case, $X_l$ is independent of $I_{ik}$ if $i\neq l$. As a result,
\begin{equation}
\mathbb{P}\left(I_{11}^{(M)}=I_{12}^{(M)}=I_{23}^{(M)}=I_{24}^{(M)}=1\middle|\bar{\mathcal{B}_3},\bar{\mathcal{B}_4}\right)=\biggl(\frac{1}{2}-\frac{1}{\pi}arctan\sqrt{\frac{1+\rho^2}{3+\rho^2}}\biggr)^2+o(1),\ \mathrm{as}\ M\rightarrow\infty.
\label{e cond prob o1}
\end{equation}
By the same token, we have
\begin{equation}
\begin{split}
&\mathbb{P}\left(I_{11}^{(M)}=I_{22}^{(M)}=I_{23}^{(M)}=1\middle|\bar{\mathcal{B}_1}\right)\\
=&\mathbb{P}\left(I_{11}^{(M)}=I_{21}^{(M)}=I_{22}^{(M)}=1\middle|\bar{\mathcal{B}_1}\right)\\
=&\frac{1}{2}\biggl(\frac{1}{2}-\frac{1}{\pi}arctan\sqrt{\frac{1+\rho^2}{3+\rho^2}}\biggr)+o(1),\quad \mathrm{as}\ M\rightarrow\infty,
\end{split}
\label{e cond prob o2}
\end{equation}
\begin{equation}
\begin{split}
&\mathbb{P}\left(I_{11}^{(M)}=I_{12}^{(M)}=I_{21}^{(M)}=I_{22}^{(M)}=1\middle|\bar{\mathcal{B}_1},\bar{\mathcal{B}_2}\right)\\
=&\mathbb{P}\left(I_{11}^{(M)}=I_{12}^{(M)}=I_{21}^{(M)}=I_{23}^{(M)}=1\middle|\bar{\mathcal{B}_1},\bar{\mathcal{B}_3}\right)\\
=&\biggl(\frac{1}{2}-\frac{1}{\pi}arctan\sqrt{\frac{1+\rho^2}{3+\rho^2}}\biggr)^2+o(1),\quad \mathrm{as}\ M\rightarrow\infty,
\end{split}
\label{e cond prob o3}
\end{equation}
and
\begin{equation}
\mathbb{P}\left(I_{11}^{(M)}=I_{21}^{(M)}=1\middle|\bar{\mathcal{B}_1}\right)=\mathbb{P}\left(I_{11}^{(M)}=I_{22}^{(M)}=1\middle|\bar{\mathcal{B}_2}\right)=\frac{1}{4}+o(1),\quad \mathrm{as}\ M\rightarrow\infty.
\label{e cond prob o4}
\end{equation}
Hence by (\ref{e expe conv}), substituting (\ref{e expe}),(\ref{e prob o}),(\ref{e cond prob o1}),(\ref{e cond prob o2}),(\ref{e cond prob o3}),(\ref{e cond prob o4}) into (\ref{e cov leq}) yields
\begin{equation}
\begin{split}
&Cov\left[(Y_1^{(M)})^2,(Y_2^{(M)})^2\right]\\
\leqslant&\frac{n^2}{4}+n^2(n-1)\biggl(\frac{1}{2}-\frac{1}{\pi}arctan\sqrt{\frac{1+\rho^2}{3+\rho^2}}\biggr)+n^2(n-1)^2\biggl(\frac{1}{2}-\frac{1}{\pi}arctan\sqrt{\frac{1+\rho^2}{3+\rho^2}}\biggr)^2\\
&-\biggl(\frac{n^2}{2}-\frac{n^2-n}{\pi}arctan\sqrt{\frac{1+\rho^2}{3+\rho^2}}\biggr)^2+o(1)\\
=&o(1),\quad \mathrm{as}\ M\rightarrow\infty.
\end{split}
\label{e cov leq2}
\end{equation}
By (\ref{e prob epsilon leq}) and (\ref{e cov leq2}), we have
\begin{equation}
\lim\limits_{M\rightarrow\infty}\mathbb{P}\left(\left|\frac{(Y_1^{(M)})^2+\cdots+(Y_n^{(M)})^2}{M}-\mu\right|\leqslant \epsilon\right)=0.
\end{equation}
Therefore,
\begin{equation}
\frac{(Y_1^{(M)})^2+\cdots+(Y_M^{(M)})^2}{M}\stackrel{\text{P}}\longrightarrow\mu=\frac{n^2}{2}-\frac{n^2-n}{\pi}arctan\sqrt{\frac{1+\rho^2}{3+\rho^2}},\quad \mathrm{as}\ M\rightarrow\infty,
\end{equation}
\begin{equation}
\hat{\rho_M}=\sqrt{\frac{3-tan^2\pi T_M}{tan^2\pi T_M-1}}\stackrel{\text{P}}\longrightarrow \rho,\quad \mathrm{as}\ M\rightarrow\infty.
\end{equation}
In conclusion, $\hat{\rho_M}$ is a consistent estimator of $\rho$.
\end{proof}

\section*{\centering{Acknowledgements}}

We are grateful to Prof. W. Huang and Prof. Q.-H. Zhang for many useful discussions and suggestsions on systemizing our ideas and polishing this thesis.

\end{document}